\documentclass[sigconf]{acmart}

\pdfoutput=1
\usepackage{hyperref}       
\usepackage{float}
\usepackage{booktabs} 
\usepackage{amsmath}
\usepackage{dsfont}
\usepackage{commath}
\usepackage{amsthm}
\usepackage{enumitem}
\usepackage{color}

\newtheorem{theorem}{Theorem}[section]

\newtheorem{corollary}[theorem]{Corollary}
\newcommand{\etal}{\textit{et al}. }
\newcommand{\ie}{\textit{i}.\textit{e}.}
\newcommand{\eg}{\textit{e}.\textit{g}.}
\pagenumbering{gobble}
\usepackage{balance}

\newenvironment{definition}[1][Definition]{\begin{trivlist}
\item[\hskip \labelsep {\bfseries #1}]}{\end{trivlist}}

\newenvironment{claim}[1][Claim]{\begin{trivlist}
\item[\hskip \labelsep {\bfseries #1}]}{\end{trivlist}}

\usepackage{caption}
\usepackage{subcaption}
\usepackage[utf8]{inputenc} 
\usepackage[T1]{fontenc}    
\usepackage{url}            
\usepackage{booktabs}       
\usepackage{amsfonts}       
\usepackage{nicefrac}       
\usepackage{microtype}      

\usepackage{tablefootnote}
\usepackage[flushleft]{threeparttable}


\copyrightyear{2018} 
\acmYear{2018} 
\setcopyright{acmcopyright}
\acmConference[KDD '18]{The 24th ACM SIGKDD International Conference on Knowledge Discovery and Data Mining}{August 19--23, 2018}{London, United Kingdom}
\acmBooktitle{KDD '18: The 24th ACM SIGKDD International Conference on Knowledge Discovery and Data Mining, August 19--23, 2018, London, United Kingdom}
\acmPrice{15.00}
\acmDOI{10.1145/3219819.3220070}
\acmISBN{978-1-4503-5552-0/18/08}






\begin{document}
\fancyhead{}
\title{Learning Credible Models}






\author{Jiaxuan Wang}
\affiliation{%
  \institution{University of Michigan}
}
\email{jiaxuan@umich.edu}

\author{Jeeheh Oh}
\affiliation{%
  \institution{University of Michigan}
}
\email{jeeheh@umich.edu}

\author{Haozhu Wang}
\affiliation{%
  \institution{University of Michigan}
}
\email{hzwang@umich.edu}

\author{Jenna Wiens}
\affiliation{%
  \institution{University of Michigan}
}
\email{wiensj@umich.edu}

\renewcommand{\shortauthors}{J. Wang et al.}

\begin{abstract}
In many settings, it is important that a model be capable of providing reasons for its predictions (\ie, the model must be interpretable). However, the model's reasoning may not conform with well-established knowledge. In such cases, while interpretable, the model lacks \textit{credibility}. In this work, we formally define credibility in the linear setting and focus on techniques for learning models that are both accurate and credible. In particular, we propose a regularization penalty, expert yielded estimates (EYE), that incorporates expert knowledge about well-known relationships among covariates and the outcome of interest. We give both theoretical and empirical results comparing our proposed method to several other regularization techniques. Across a range of settings, experiments on both synthetic and real data show that models learned using the EYE penalty are significantly more credible than those learned using other penalties. Applied to two large-scale patient risk stratification task, our proposed technique results in a model whose top features overlap significantly with known clinical risk factors, while still achieving good predictive performance.
\end{abstract}

\keywords{Model Interpretability, Regularization}

\settopmatter{printfolios=true}

\maketitle

\section{Introduction}

\begin{table*}[t] 
  \caption{A comparison of relevant regularization penalties.}
  \centering
  \scalebox{0.8}{
  \begin{tabular}{lcccl}
    \toprule
    Method & Formulation & Sparsity & Grouping effect & Consistency \\
    \midrule
  LASSO & $\Vert \boldsymbol{\theta} \Vert_1$ & yes & no & conditioned \cite{zhao2006model}\\
  ridge  & $\frac{1}{2}\Vert \boldsymbol{\theta} \Vert_2^2$ & no & yes & yes\\
  elastic net & $\beta \Vert \boldsymbol{\theta} \Vert_1 + \frac{1}{2} (1-\beta) \Vert \boldsymbol{\theta} \Vert_2^2$ & yes & yes & conditioned \cite{jia2010model}\\
  OWL & $\sum_{i=1}^n w_i |\boldsymbol{\theta}|_{[i]}$ & yes & yes & unknown \\
  weighted LASSO & $\Vert \boldsymbol{w} \odot \boldsymbol{\theta}\Vert_1$ & yes & yes & no\\
  weighted ridge & $\frac{1}{2}\Vert \boldsymbol{w} \odot \boldsymbol{\theta} \Vert_2^2$ & no & no & no \\
    adaptive LASSO &$\Vert \boldsymbol{w^*} \odot \boldsymbol{\theta}\Vert_1$ & yes & no & conditioned \cite{zou2006adaptive} \\
    \bottomrule
  \end{tabular}}
  \label{table3}
\end{table*}

 For adoption, predictive models must achieve good predictive performance. Often, however, good performance alone is not enough. In many settings, the model must also be interpretable or capable of providing reasons for its predictions. For example, in healthcare applications, research has shown that decision trees are preferred among physicians because of their high level of interpretability \cite{meyfroidt2009machine,kononenko2001machine}. Still, interpretability alone may not be enough to encourage adoption. If the reasons provided by the model do not agree, at least in part, with well-established domain knowledge, practitioners may be less likely to trust and adopt the model. 

 Often, one ends up trading off such credibility for interpretability, especially when it comes to learning sparse models. For example, regularization penalties, like the LASSO penalty, encourage sparsity in the learned feature weights, but in doing so may end up selecting features that are merely associated with the outcome rather than those that are known to affect the outcome. This can easily occur when there is a high-degree of collinearity present in one’s data. In short, interpretability does not imply credibility.

 Informally, a credible model is an interpretable model that i) provides reasons for its predictions that are, at least in part, inline with well-established domain knowledge, and ii) does no worse than other models in terms of predictive performance. While a user is more likely to adopt a model that agrees with well-established domain knowledge, one should not have to sacrifice accuracy to achieve such adoption. That is, the model should only agree with well-established knowledge, if it is consistent with the data. Relying on domain expertise alone would defeat the purpose of data-driven algorithms, and could result in worse performance. Admittedly, the definition of credibility is a subjective matter. In this work, we offer a first attempt to formalize the intuition behind a credible model. 

Our main contributions include:
\begin{itemize}
  \item formally defining credibility in the linear setting
  \item proposing a novel regularization term EYE (expert yielded estimates) to achieve this form of credibility. 
\end{itemize}
 Our proposed approach leverages domain expertise regarding known relationships between the set of covariates and the outcome. This domain expertise is used to guide the model in selecting among highly correlated features, while encouraging sparsity. Our proposed framework allows for a form of collaboration between the data-driven learning algorithm and the expert. We prove desirable properties of our approach in the least squares regression setting. Furthermore, we give empirical evidence of these properties on synthetic and real datasets. Applied to two large-scale patient risk stratification tasks, our proposed approach resulted in an accurate model and a feature ranking that, when compared to a set of well-established risk factors, yielded an average precision (AP) an order of magnitude greater than the second most credible model in one task, and twice as large in AP in the other task.

 The rest of the paper is organized as follows. Section \ref{related_work} reviews related work on variable selection and interpretability. Section \ref{proposed_approach} defines credibility and describes our proposed method in detail. Section \ref{experiments} presents experiments and results. Section \ref{discussion} summarizes the importance of our work and suggests potential extensions of our proposed method.

\section{Related Work} \label{related_work}

Credibility is closely related to interpretability, which has been actively explored in the literature \cite{hara2016finding,lakkaraju2017learning,lipton2016mythos,ribeiro2016should,vstrumbelj2014explaining,ustun2014methods}. Yet, to the best of our knowledge, credibility has never been formally studied. 

Interpretability is often achieved through dimensionality reduction. Common approaches include preprocessing the data to eliminate correlation, or embedding a feature selection criterion into the model's objective function. Embedding a regularization term in the objective function is often preferred over preprocessing techniques since it is nonintrusive in the training pipeline. Thus, while credible models could, in theory, be achieved by first preprocessing the data, we focus on a more general approach that relies on regularization. 

The most common forms of regularization, $l_1$ (LASSO) and $l_2$ (ridge), can be interpreted as placing a prior distribution on feature weights \cite{zou2006adaptive} and can be solved analytically (LASSO in the orthogonal case, ridge in the general setting). The sparsity in feature weights induced by LASSO's diamond shaped contour is often desirable, thus many extensions of it have been proposed, including elastic net \cite{zou2005regularization}, ordered weighted LASSO (OWL) \cite{owl}, adaptive LASSO \cite{zou2006adaptive}, and weighted LASSO \cite{bergersen2011weighted}. 

In Table \ref{table3}, we summarize relevant properties for several common regularization terms. $\boldsymbol{\theta}$ represents the model parameters; $\beta \in [0,1]$ is a hyperparameter that controls the tradeoff between the $l_1$ and $l_2$ norms; $\boldsymbol{w}$ is a set of non-negative weights for each feature; $\boldsymbol{w^*}$ is the optimal set of weights (according to a least squares solution) \cite{zou2006adaptive}; $|\boldsymbol{\theta}|_{[i]}$ is the $i^{th}$ largest parameter sorted by magnitude; and $\odot$ is the elementwise product. The grouping effect refers to the ability to group highly correlated covariates together \cite{zou2005regularization}, and consistency refers to the property that learned features converge in distribution to the true underlying feature weights \cite{ibragimov2013statistical}. Without the grouping effect, some relevant features identified as important by experts may end up not being selected because they are correlated with other relevant expert recommended features.

In terms of incorporating additional expert knowledge at training time, Sun \etal explore using features identified as relevant during training, along with a subset of other features that yield the greatest improvement in predictive performance \cite{sun2012combining}. This work differs from ours because they assume expert knowledge as ground truth, a potentially dangerous assumption when experts are wrong. Vapnik \etal explore the theory of learning with privileged information \cite{vapnik2015learning}. Though similar in setting, they use expert knowledge to accelerate the learning process, not to enforce credibility.  Helleputte and Dupont use partially supervised approximation of zero-norm minimization (psAROM) to create a sparse set of relevant features. Much like weighted LASSO, psAROM does not exhibit the grouping effect, thus is unable to retain all known relevant features. Moreover, the non-convex objective function for psAROM makes exact optimization hard \cite{helleputte2009partially}. \cite{choi2017gram} looks at utilizing hierarchical expert information to learn embeddings that help model prediction of rare diseases. While it is an interesting approach, its model's interpretability is questionable. \cite{ross2017right} constrains the input gradient of features that are believed not to be relevant in a neural network. In the linear setting, the method simplifies to $l_2$ regularization on unknown features, which is suboptimal for model interpretability because the learned weights are dense.

Perhaps closest to our proposed approach, and the concept of credibility, is related work in interpretability that focuses on enforcing monotonicity constraints between the covariates and the prediction \cite{ben1995monotonicity,kotlowski2009rule,martens2011performance,pazzani2001acceptance,verbeke2011building}. The main idea behind this branch of work is to restrict classifiers to the set of monotone functions. This restriction could be probabilistic \cite{kotlowski2009rule} or monotone in certain arguments identified by experts \cite{ben1995monotonicity,pazzani2001acceptance,verbeke2011building}. Though similar in aim (having models inline with domain expertise), previous work has focused on rule based systems. Other attempts to enforce monotonicity in nonlinear models \cite{altendorf2012learning,sill1998monotonic,velikova2006solving} aim to increase performance. Again, relying too heavily on expert knowledge may result in a decrease in performance when experts are wrong. In contrast, we propose a general regularization technique that aims to increase credibility without decreasing performance. Moreover, in the linear setting, credible models satisfy monotonicity and sparsity constraints.

\section{Proposed Approach} \label{proposed_approach}

In this paper, we focus on linear models. Within this setting, we start by formally defining credibility in \ref{definition}. Then, building off of a naïve approach in \ref{naive_approach}, we introduce our proposed approach in \ref{EYE}. In \ref{properties}, we state important properties and theoretical results relevant to our proposed method. 

\subsection{Definition and Notation} \label{definition}

Interpretability is a prerequisite for credibility. For linear models, interpretability is often defined as sparsity in the feature weights. Here, we define the set of features as $\mathcal{D}$. We assume that we have some domain expertise that identifies $\mathcal{K}\subseteq\mathcal{D}$, a subset of the features as known (or believed) to be important. Intuitively, among a group correlated features a credible model will select those in $\mathcal{K}$, if the relationship is consistent with the data.

Consider the following unconstrained empirical risk minimization problem, 
$\boldsymbol{\hat \theta} =  \arg\min_{\boldsymbol{\theta}} L(\boldsymbol{\theta}, X, \boldsymbol{y}) + n \lambda J(\boldsymbol{\theta}, \boldsymbol{r})$ that minimizes the sum of some loss function $L$ and regularization term $J$. $X$ is an $n$ by $d$ design matrix, where row $\boldsymbol{x}$ corresponds to one observation. The corresponding entry in $\boldsymbol{y} \in \mathbb{R}^n$ is the target value for $\boldsymbol{x}$. Let $v_i$ denote the $i^{th}$ entry of a vector $\boldsymbol{v}$. {\small $\lambda \in \mathbb{R}_{\geq 0}$} is the tradeoff between loss and regularization, and $\boldsymbol{r} \in \{0,1\}^d$ is the indicator array where $r_i=1$ if {\small $i\in\mathcal{K}$} and $0$ otherwise. Note that our setting differs from the conventional setting only through the inclusion of $\boldsymbol{r}$ in the regularization term. For theoretical convenience, we prove theorems in the least squares regression setting and denote {\small $\boldsymbol{\hat \theta}^{OLS}$} as the ordinary least squares solution. For experiments, we use logistic loss. 

We denote $\boldsymbol{\theta}$ as the true underlying parameters. Then $\boldsymbol{\theta}_{\mathcal{K}}$ and $\boldsymbol{\theta}_{\mathcal{D}\setminus \mathcal{K}}$ are the true parameters associated with the subset of known and unknown features, respectively. Throughout the text, vectors are in bold, and estimates are denoted with a hat. 

\begin{definition}
A linear model is \emph{credible} if

\begin{enumerate}[label=(\roman*)]
\itemsep0em
\item Within a group of correlated \textit{relevant} features $\mathcal{C}\subseteq{D}$: $\boldsymbol{\hat\theta}_{\mathcal{K}\cap \mathcal{C}}$ is dense, and $\boldsymbol{\hat\theta}_{\mathcal{C}\setminus \mathcal{K}}$ is sparse (\textit{structure constraint}).
\item  Model performance is comparable with other regularization techniques (\textit{performance constraint})
\end{enumerate}

Consider the following toy example where $|\mathcal{C}|=2$ and one of these features has been identified $\in \mathcal{K}$ by the expert, while the other has not. One could arbitrarily select among these two correlated features, including only one in the model. To increase credibility, we encourage the model to select the known feature (\ie, the feature in $\mathcal{K}$)

We stress \textit{relevant} in the definition because we do not care about the structure constraint if the group of variables does not contribute to the predictive performance. We assume expert knowledge is sparse compared to all features; thus a credible model is sparse due to the structure requirement. Credible models will result in dense weights among the known features, if the expert knowledge provided is indeed supported by the data. If experts are incorrect, \ie, the set of features $\mathcal{K}$ are not relevant to the task at hand, then credible models will discard these variables, encouraging sparsity.


\end{definition}

\subsection{A Naïve Approach to Credibility} \label{naive_approach}

Intuitively, one may achieve credibility by constraining weights for known important factors with the $l_2$ norm and weights for other features with the $l_1$ norm. The $l_2$ norm will maintain a dense structure in known important factors and the $l_1$ norm will encourage sparsity on all remaining covariates. Formally, this penalty can be written as  $q(\boldsymbol{\theta}) = (1-\beta) \Vert \boldsymbol{r} \odot \boldsymbol{\theta} \Vert_2^2 + 2 \beta  \Vert (\boldsymbol{1}-\boldsymbol{r}) \odot \boldsymbol{\theta} \Vert_1$ where $\boldsymbol{\theta} \in \mathbb{R}^d$, $\beta \in (0,1)$ controls the tradeoff between weights associated with the features in $\mathcal{K}$ and in $\mathcal{D} \setminus \mathcal{K}$. 

Unfortunately, $q$ does not encourage sparsity in {\small $\boldsymbol{\hat\theta}_{\mathcal{D}\setminus \mathcal{K}}$}. \textbf{Figure \ref{fig:sub1}} shows its contour plot. For a convex problem, each level set of the contour corresponds to a feasible region associated with a particular $\lambda$. A larger level value implies a smaller $\lambda$. It is clear from the figure that this penalty is non-homogeneous, that is $f(t\boldsymbol{x}) \neq |t|f(\boldsymbol{x})$. In a two-dimensional setting, when the covariates perfectly correlate with one another, the level curve for the loss function will have a slope of $-1$ corresponding to the violet dashed lines in \textbf{Figure \ref{fig:fig1}}. 

To understand why the slope must be $-1$, consider the classifier $y=\theta_{\mathcal{K}} x_1 + \theta_{\mathcal{D}\setminus\mathcal{K}} x_2$. Since $x_1$ and $x_2$ are perfectly correlated by assumption, we have $y=(\theta_{\mathcal{K}} + \theta_{\mathcal{D}\setminus\mathcal{K}}) x_1$. Note that the loss value is fixed as long as $\theta_{\mathcal{K}}+\theta_{\mathcal{D}\setminus\mathcal{K}}$ is fixed, which means that each level curve of the loss function has the form $\theta_{\mathcal{K}}+\theta_{\mathcal{D}\setminus\mathcal{K}}=c$ for some scaler $c$, \ie, $\theta_{\mathcal{D}\setminus\mathcal{K}}=-\theta_{\mathcal{K}}+c$. Thus, the slope of the violet lines must be $-1$ in \textbf{Figure \ref{fig:fig1}}.

By the KKT conditions, with $\lambda>0$, the optimal solution (red dots for each level curve in \textbf{Figure \ref{fig:fig1}}) occurs at the boundary of the contour with the same slope ($\lambda=0$ means the problem is unconstrained, then all methods are equal). We observe that with small $\lambda$, the large constraint region forces the model to favor features not in $\mathcal{K}$ because the point on the boundary with slope of $-1$ occurs near $\theta_{\mathcal{D}\setminus \mathcal{K}}$ axis, leading to a model that is not credible. 

\subsection{The Expert Yielded Estimates (EYE) Penalty} \label{EYE}

To address this sensitivity to the choice of hyperparameter, we propose the EYE penalty, obtained by fixing a level curve of $q$ and scaling it for different contour levels. The trick is to force the slope of level curve in the positive quadrant to approach $-1$ as {\small $\theta_{\mathcal{D}\setminus \mathcal{K}}$} approaches $0$. Note that since $q$ is symmetric around both axes, we can just focus on one "corner." That is, we want the "corner" on the right of the level curve to have a slope of $-1$, so that {\small $\boldsymbol{\hat \theta}$} hits it in the perfectly correlated case. In fact, as long as $-1 \leq$ the "corner" slope $\leq 0$, we achieve the desired feature selection. In the extreme case of slope $0$ ($\beta=1$), we do not penalize $\boldsymbol{\theta}_{\mathcal{K}}$ at all. Using a slope with a magnitude smaller than $1$ assumes that features in $\mathcal{K}$ are much more relevant than other features, thus biasing $\boldsymbol{\hat \theta}_{\mathcal{K}}$. Since we do not wish to bias $\boldsymbol{\hat\theta}_{\mathcal{K}}$ towards larger values, if the solution is inconsistent with the data, we keep the slope as $-1$. This minimizes the effect of our potential prejudices, while maintaining the desirable feature selection properties. Casting our intuition mathematically yields the EYE penalty:
{\small
\begin{equation} \label{eye-defn-orig}
 eye(\boldsymbol{x}) = \inf \left \{t>0  \mid \boldsymbol{x} \in \left \{t \boldsymbol{x} \mid q(\boldsymbol{x}) \leq \frac{\beta^2}{1-\beta} \right \} \right \}
\end{equation}}
where $t$ is a scaling factor to make EYE homogeneous and the inner set defines the level curve to fix. Note that $\beta$ only scales the EYE penalty, thus can rewrite the penalty as:
\begin{equation} \label{eye-defn}
   eye(\boldsymbol{\theta}) = \Vert (\boldsymbol{1}-\boldsymbol{r}) \odot \boldsymbol{\theta} \Vert_1 + \sqrt{ \Vert (\boldsymbol{1}-\boldsymbol{r}) \odot \boldsymbol{\theta} \Vert_1^2   +  \Vert \boldsymbol{r} \odot \boldsymbol{\theta} \Vert_2^2} 
\end{equation}
Derivations of (\ref{eye-defn-orig}) and (\ref{eye-defn}) are included in the Appendix. \textbf{Figure \ref{fig:sub2}} shows the contour plot of EYE penalty (note that the optimal solution for each level set occurs at the "corner" as desired). 

\begin{figure*}
\centering
\begin{subfigure}{0.33\textwidth}
  \centering
  \includegraphics[width=1\linewidth]{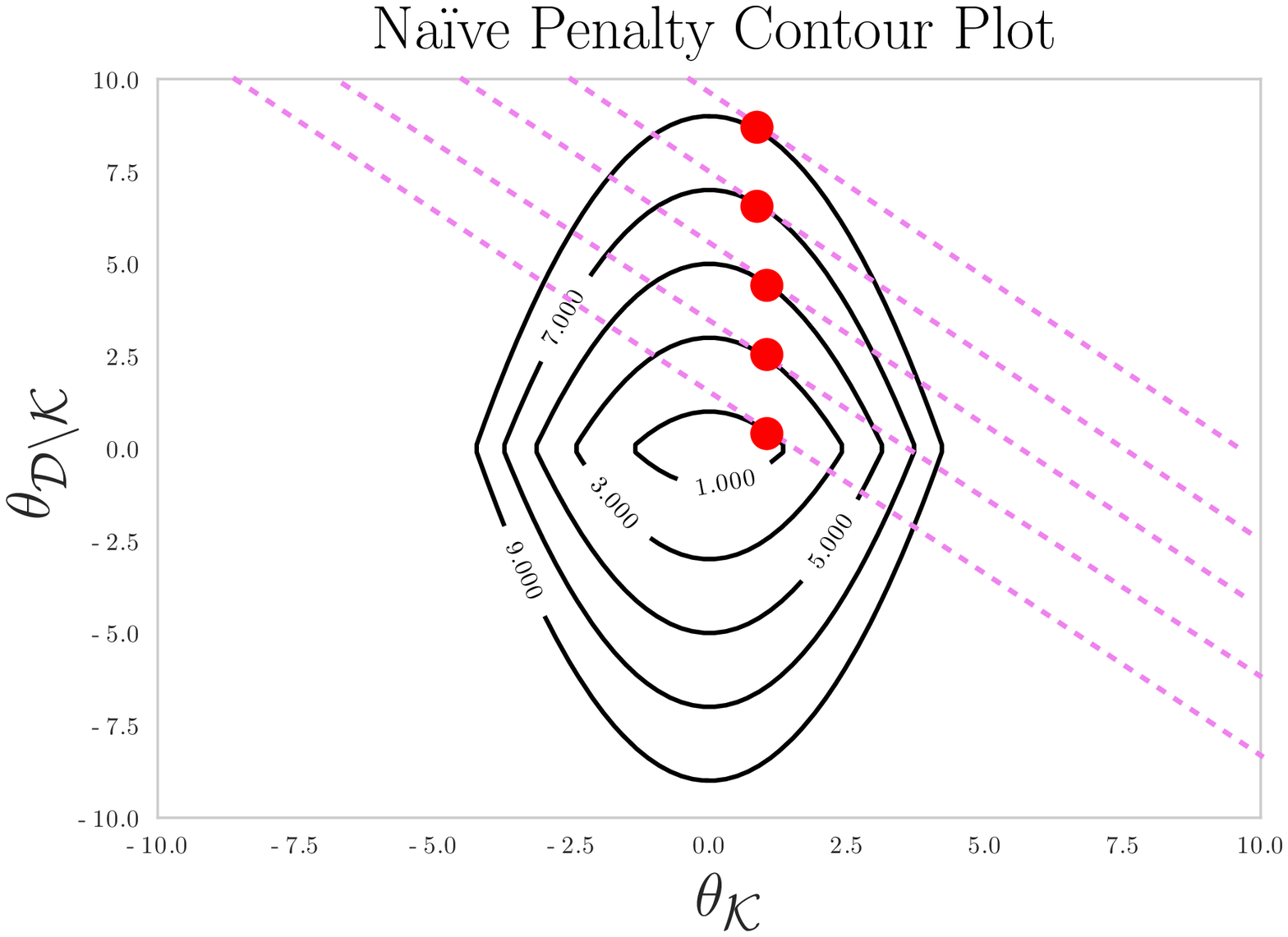}
  \caption{} \label{fig:sub1}
\end{subfigure}
\begin{subfigure}{0.33\textwidth}
  \centering
  \includegraphics[width=1\linewidth]{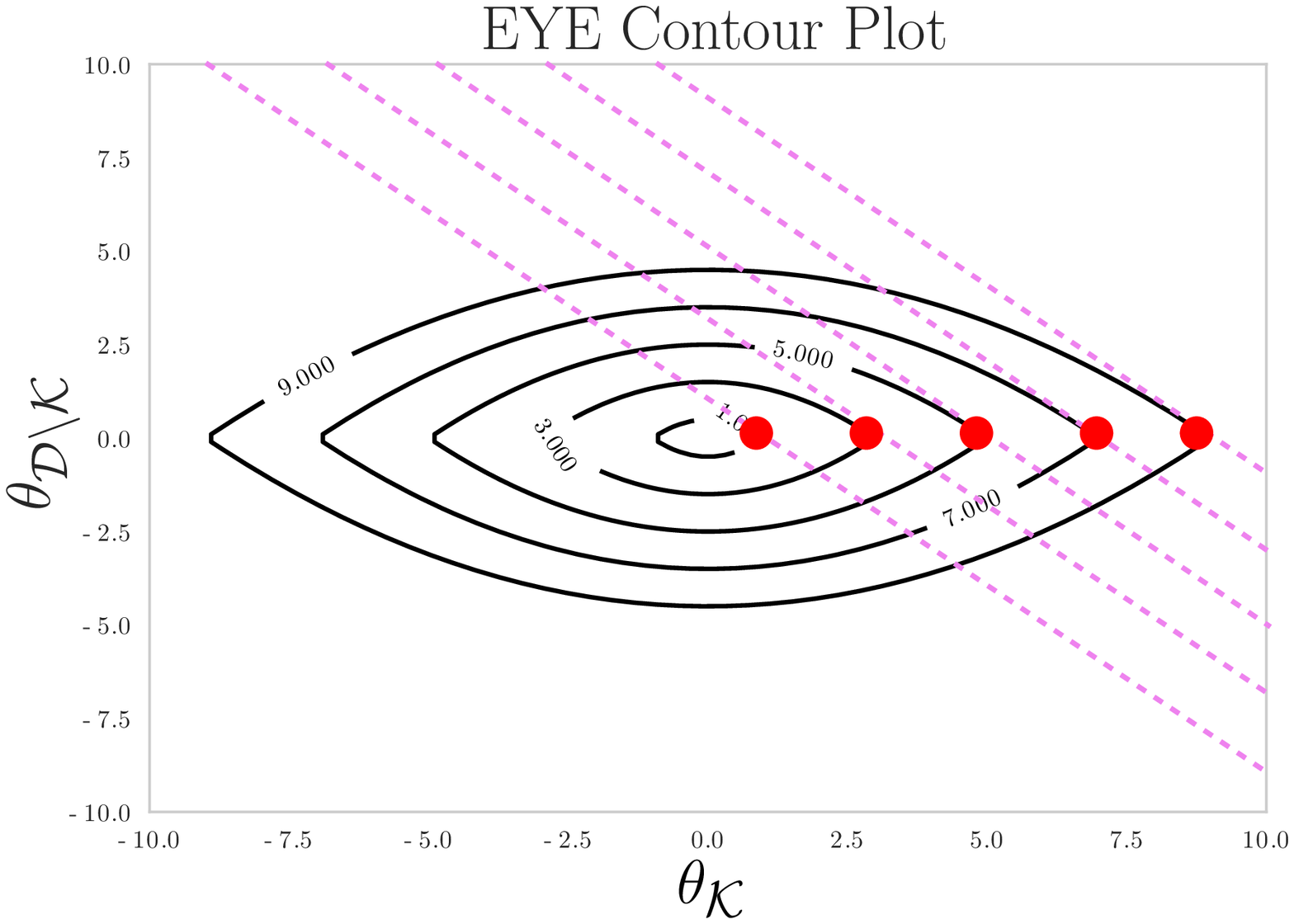}
  \caption{} \label{fig:sub2}
\end{subfigure}
\begin{subfigure}{0.33\textwidth}
  \centering
  \includegraphics[width=1\linewidth]{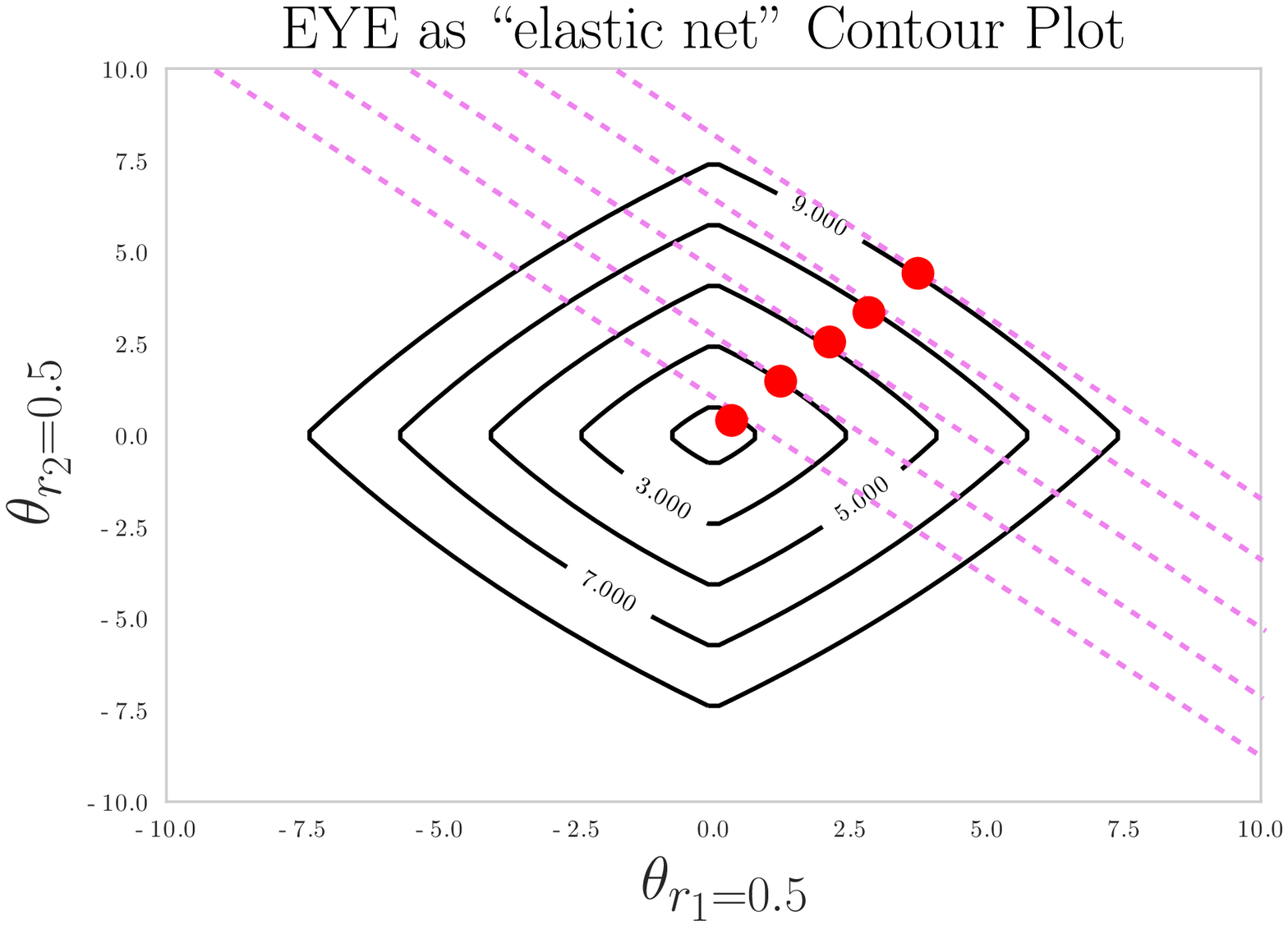}
  \caption{}\label{fig:sub3}
\end{subfigure}

\caption{\small Visualization of selected regularization penalties. Dashed violet lines denote level sets for the loss function when features are perfectly correlated; red dots are the optimal points for each feasible region. A large feasible region (level sets with large labeled values) corresponds to a small $\lambda$. \textbf{(a)} The naïve penalty ($\beta=0.5$) favors $\theta_{\mathcal{D}\setminus \mathcal{K}}$ as the feasible region grows. \textbf{(b)} EYE consistently favors $\theta_{\mathcal{K}}$. \textbf{(c)} When $\boldsymbol{r}=\boldsymbol{0.5}$, EYE produces a contour plot similar to elastic net. Setting $\boldsymbol{r}=\boldsymbol{0.5}$ represents a situation in which two features $i$ and $j$ are equally "known" and perfectly correlated. In this setting, $\hat \theta_i=\hat \theta_j$ (\ie, highly correlated known factors have similar weights)}
\label{fig:fig1}
\end{figure*}

\subsection{EYE Properties} \label{properties}

In this section, we give theoretical results for the proposed EYE penalty. We include detailed proofs in the Appendix$^1$. While the first three properties are general, the last three properties are valid in the least squares regression setting, \ie, {\small $Loss(\boldsymbol{\theta}, X, \boldsymbol{y}) = \frac{1}{2} \Vert \boldsymbol{y} - X \boldsymbol{\theta} \Vert_2^2$}. We focus on the least square regression setting because a closed form solution exists, though our method is applicable to the classification setting as well (demonstrated in section \ref{experiments}).

\textbf{EYE is a norm:} This comes for free as \textbf{Equation (\ref{eye-defn-orig})} is an atomic norm \cite{chandrasekaran2012convex}, thus, convex.

\textbf{EYE is $\beta$ free:} Similar to elastic net and the naïve penalty $q$,  EYE is a combination of the $l_1$ and $l_2$ norms, but it omits the extra parameter $\beta$. This leads to a quadratic reduction in the hyperparameter search space for EYE compared to elastic net and $q$. 

\textbf{EYE is a generalization of LASSO, $l_2$ norm, and ``elastic net'':} Setting $\boldsymbol{r}=\boldsymbol{1}$ and $\boldsymbol{0}$, we recover the $l_2$ norm and LASSO penalties, respectively. Relaxing $\boldsymbol{r}$ from a binary valued vector to a float valued vector, so that $\boldsymbol{r}=\boldsymbol{0.5}$, we get the elastic net shaped contour (\textbf{Figure \ref{fig:sub3}}). Elastic net is in quotes because the contour represents one particular level set, and elastic net is non-homogeneous.

\textbf{EYE promotes sparse models:}
Assuming $X^\top X = I$, the solution to EYE penalized least squares regression is sparse.
\textbf{Figure \ref{fig:sub4}} illustrates this effect in the context with other regularization penalties.

\textbf{EYE favors a solution that is sparse in {\small $\boldsymbol{\hat \theta}_{\mathcal{D}\setminus \mathcal{K}}$} and dense in {\small $\boldsymbol{\hat \theta}_{\mathcal{K}}$}:} \label{property2} In a setting in which covariates are perfectly correlated, {\small $\boldsymbol{\hat \theta}_{\mathcal{D}\setminus \mathcal{K}}$} will be set to exactly zero. Conversely, {\small $\boldsymbol{\hat \theta}_{\mathcal{K}}$} has nonzero entries. Moreover, the learned weights will be the same for every entry of {\small $\boldsymbol{\hat \theta}_{\mathcal{K}}$} (\textit{e.g.}, \textbf{Figure \ref{fig:sub3}}). This verifies the first part of the structure constraint. We also note that when the group of correlated features are all in {\small $\mathcal{D}\setminus \mathcal{K}$}, the objective function reverts back to LASSO, so that the weights are sparse, substantiating the second part of the structure constraint.

\textbf{EYE groups highly correlated known factors together:} 

If $\hat \theta_i \hat \theta_j > 0$ and the design matrix is standardized, then
\begin{equation*}
\resizebox{.47\textwidth}{!}{$
  \frac{|r_i^2 \hat \theta_i - r_j^2 \hat \theta_j|}{Z} \leq \frac{\sqrt{2 (1-\rho)} \Vert \boldsymbol{y} \Vert_2}{n\lambda}
  + |r_i-r_j| \left (1+\frac{\Vert (\boldsymbol{1}-\boldsymbol{r}) \odot \boldsymbol{\hat \theta} \Vert_1}{Z} \right )
$}
\end{equation*}

where $\rho$ is the sample covariance between $x_i$ and $x_j$, and {\small $Z = \sqrt{\Vert (\boldsymbol{1}-\boldsymbol{r}) \odot \boldsymbol{\hat \theta} \Vert_1^2 + 
    \Vert \boldsymbol{r} \odot \boldsymbol{\hat \theta} \Vert_2^2}$.}

This implies that when $r_i=r_j \neq 0$
 {\small $$\frac{|\hat \theta_i - \hat \theta_j|}{Z} \leq \frac{\sqrt{2(1-\rho)} \Vert \boldsymbol{y} \Vert_2}{r_i^2 n \lambda}$$}
\textit{I.e.}, the more correlated known important factors are, the more similar their weights will be. This is analogous to the grouping effect. 

\section{EXPERIMENTS} \label{experiments}

In this section, we empirically verify EYE's ability to yield credible models through a series of experiments. We compare EYE to a number of other regularization penalties across a range of settings using both synthetic and real data.

\subsection{Measuring Credibility} \label{measure_credibility}
\textbf{Criterion (i): density in the set of known relevant features and sparsity in the set of unknown}. In a two dimensional setting, we measure $\log |\frac{\theta_{\mathcal{K}}}{\theta_{\mathcal{D} \setminus \mathcal{K}}}|$ as a proxy for desirable weight structure (the higher the better). In a high-dimensional setting, highly correlated covariates form groups. For each group of correlated features, if known factors exist and are indeed important, then the shape of the learned weights should match $\boldsymbol{r}$ in the corresponding groups. \textit{E.g.}, given two correlated features $x_1$ and $x_2$ that are associated with the outcome, if $r_1=0$ and $r_2=1$, then $\theta_1=0$ and $\theta_2 \neq 0$. Thus, to measure credibility, we use the symmetric KL divergence, {\small $symKL(\boldsymbol{\hat\theta_g}', \boldsymbol{r}') = \frac{1}{2} \left ( KL(\boldsymbol{\hat\theta_g}' \Vert \boldsymbol{r}') + KL(\boldsymbol{r}' \Vert \boldsymbol{\hat\theta_g}') \right )$}, between the normalized absolute value of learned weights and the normalized $\boldsymbol{r}$ for each group $g$. For groups of relevant features that do not contain known factors, the learned weights should be sparse (\ie, all weight should be placed on a single feature within the group). Thus, we report {\small $\min_{\boldsymbol{x} \in \textit{one hot vectors}} symKL(\boldsymbol{x}, \boldsymbol{\hat \theta}')$} for such groups. As {\small $symKL$} decreases, the credibility of a model increases. Note that {\small $symKL$} only measures the shape of weights within \textit{each group} of correlated features and does not assume expert knowledge is correct (\eg, all weights within a group could be near zero). 

In our experiments on real data, we do not know the true underlying $\boldsymbol{\theta}$ and the partition of groups. In this case, we measure credibility by computing the fraction of known important factors in the top $n$ features sorted by the absolute feature weights learned by the model. We sweep $n$ from $1$ to $d$ and report the average precision (AP) between $|\boldsymbol{\hat \theta}|$ and $\boldsymbol{r}$.  

\textbf{Criterion (ii): maintained classification performance}. Recall that we want to learn a credible model without sacrificing model performance. That is, there should be no statistically significant difference in performance between a credible model and the best performing one (in this case, we focus on best linear models learned using other regularization techniques). We measure model performance in terms of the area under the receiver operating characteristic curve (AUC). In our experiments, we split our data into train, validation, and test sets. We train a model for each hyperparameter and bootstrap the validation set 100 times and record performance on each bootstrap sample. We want a model that is both accurate and sparse (measured using the Gini coefficient due to its desirable properties \cite{hurley2009comparing}). To ensure accuracy, for each regularization method, we remove models that are significantly worse than the best model in that regularization class using the validation set bootstrapped 100 times (p value set at .05). From this filtered set, we choose the sparsest model and report criteria (i) and (ii) on the held-out test set. 

\subsection{Experimental Setup and Benchmarks}
We compare EYE to the regularization penalties in \textbf{Table \ref{table3}} across various settings. We exclude ridge from our comparisons, because it produces a dense model (\textbf{Figure \ref{fig:sub4}}). In addition, we exclude adaptive LASSO because it requires an additional stage of processing. 

We set the weights, $\boldsymbol{w}$, in \textbf{Table \ref{table3}}, to mimic the effect of the $\boldsymbol{r}$. This gives a subset of the regularization techniques according to the same kind of expert knowledge that our proposed approach uses. In weighted LASSO and weighted ridge, the values in {\small $\boldsymbol{w}_{\mathcal{D}\setminus \mathcal{K}}$} were swept from $1$ to $3$ times the magnitude of the values in {\small $\boldsymbol{w}_{\mathcal{K}}$} to penalize unknown factors more heavily. For OWL, we set the weights in two ways. In the first case, we only penalize {\small $|\boldsymbol{\hat \theta}|_{[1]}$}, effectively recovering the {\small $l_{\infty}$} norm. In the second case, weights for the $m$ largest entries in {\small $\boldsymbol{\hat \theta}$} are set to be twice the magnitude of the rest, where $m$ is the number of known important factors. Note that a direct translation from known factors to weights is not possible in OWL, since the weights are determined based on the learned ordering. We implemented all models as a single layer perceptron with a softmax trained using the ADADELTA algorithm \cite{zeiler2012adadelta} minimizing the logistic loss. 

\subsection{Validation on Synthetic Datasets}
To test EYE under a range of settings, we construct several synthetic datasets \footnote[2]{code available at \url{https://github.com/nathanwang000/credible_learning}}. In all experiments, we generate the data and run logistic regression with EYE and each regularization benchmark. In all of our experiments on synthetic data, we found no statistically significant differences in AUC, thus satisfying the performance constraint. These experiments expose  the limitations of the na\"ive penalty, measure sensitivity to noise and to correlation in covariates, explore different shapes of $\boldsymbol{r}$, and examine the effect of the accuracy of expert knowledge on credibility. In all cases, the EYE penalty leads to the most credible model, validating our theoretical results. 

\subsubsection{Limitations of the Naïve Penalty: Sensitivity to Hyperparameters}

 The naïve penalty $q$ appears to be a natural solution for building credible linear models. However, since $q$ is non-homogeneous, as the constraint region grows, the models begin to prefer features \textit{not} in $\mathcal{K}$. Since small $\lambda$ corresponds to a large constraint region, we vary $\lambda$ to expose this undesirable behavior. 

 We sample $100$ data points uniformly at random from $-2.5$ to $1.5$ to create $\boldsymbol{v}$. We set $X = [\boldsymbol{v}, \boldsymbol{v}]$ to produce two perfectly correlated features with one known factor. We set $\boldsymbol{\theta}=[1,1]$ (note that since the two features are perfectly correlated, it doesn't matter how $\boldsymbol{\theta}$ is assigned), and assign the label $\boldsymbol{y}$ as $\mathds{1}_{\boldsymbol{\theta}^\top \boldsymbol{x} > 0}(\boldsymbol{x})$ for each data point $\boldsymbol{x}$. 

 \textbf{Figure \ref{fig:fig2}} shows the log ratio for credibility for different settings of $\lambda$ and $\beta$. First note that as $\lambda$ approaches zero, the log ratio approaches $0$ for all methods because the models are effectively unconstrained. With nontrivial $\lambda$ and large $\beta$, both EYE and the naïve penalty result in high credibility. This is expected as a large $\beta$ will constrain known important factors less, thus placing more weight on them. 
 For $\beta$ in the lower range, the log ratio is negative because the naïve penalty penalizes known features more. For $\beta$ in the middle range, the log ratio varies from credible to non-credible, exhibiting the artifact of non-homogeneity (the penalty contour is elongated along $\theta_{\mathcal{K}}$ as $\lambda$ decreases, thus again favoring $X_{\mathcal{D} \setminus \mathcal{K}}$). Since we want the log ratio$>0$ for all nontrivial $\lambda$, the naïve penalty with $\beta < 0.8$ fails. 

\begin{figure}[h!]
\centering
\begin{subfigure}{0.48\textwidth}
  \centering
  \includegraphics[width=1\linewidth]{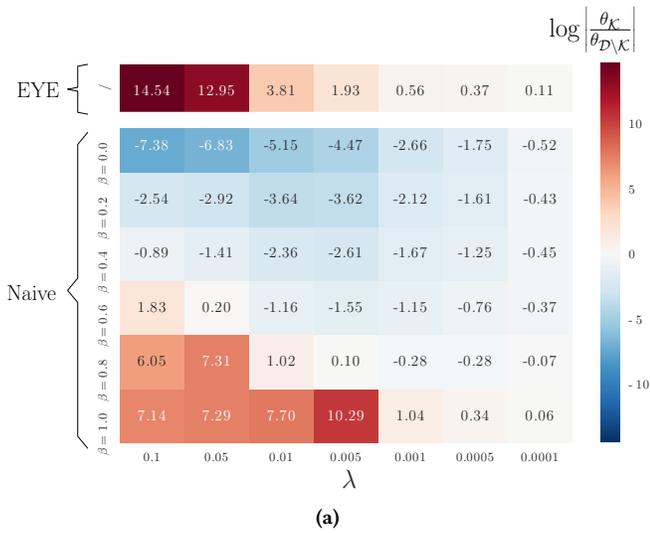}
  \caption{}
  \label{fig:fig2}
\end{subfigure}%
\quad
\begin{subfigure}{0.48\textwidth}
  \centering
  \includegraphics[width=1\linewidth]{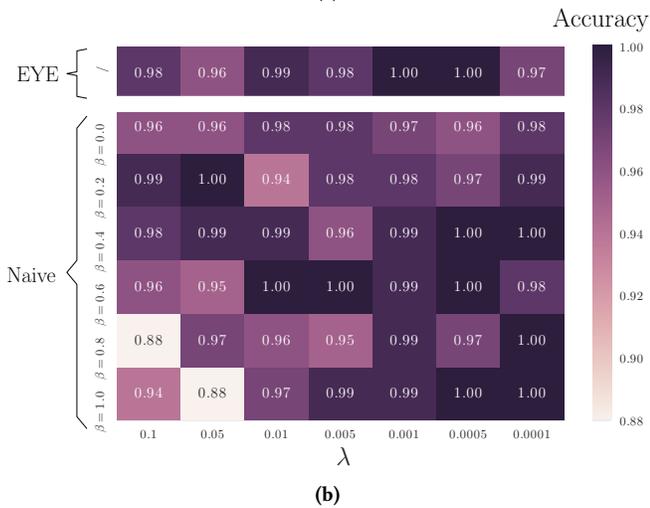}
  \caption{}
  \label{fig:fig7}
\end{subfigure}
\caption{A comparison of the naïve penalty and EYE. \textbf{(a)} EYE meets the structural constraint better than naïve penalty with small and mid-ranged $\beta$ \textbf{(b)} EYE has better performance than naïve Penalty with large $\beta$.}
\end{figure}

The naïve penalty with large $\beta$ also fails to produce credible models because the resulting models have worse classification performance. In particular, when $\beta>0.8$, the naïve penalty overemphasizes the relevancy of known important factors. 
 As shown in \textbf{Figure \ref{fig:fig7}}, the naïve penalty with large $\beta$ performs considerably worse in terms of accuracy than EYE for large $\lambda$. On small $\lambda$, their performance are comparable. This is expected because EYE introduces less bias towards known important factors.


\subsubsection{Varying the Degree of Collinearity}

We can show theoretically that EYE results in a credible model when features are highly correlated. However, the robustness of EYE in the presence of noise is unknown. To explore how EYE responds to changes in correlation between features, we conduct an experiment in a high-dimensional setting. 


\begin{figure}
\centering
  \includegraphics[width=0.8\linewidth]{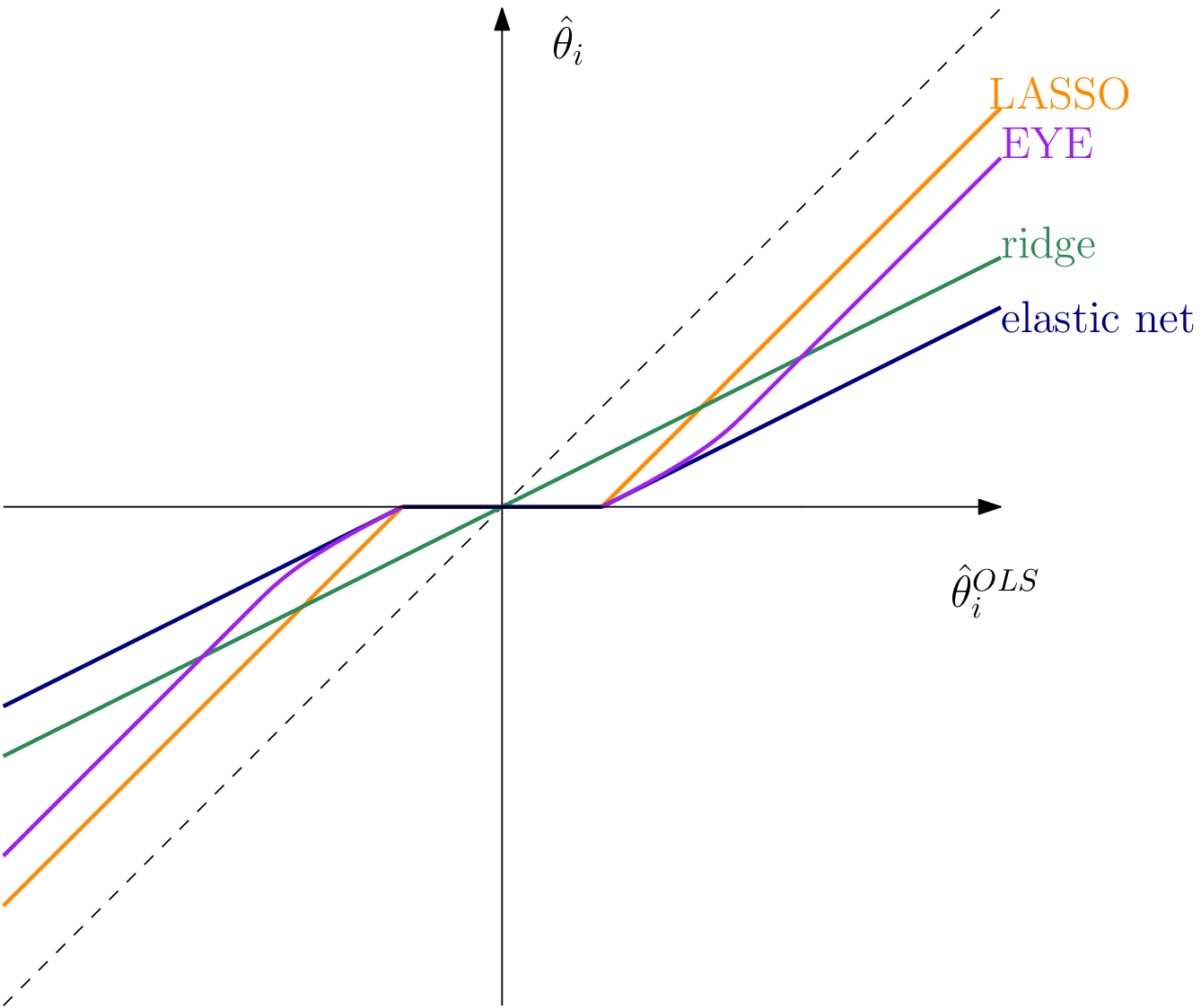}  
  \caption{When the design matrix is orthonormal, EYE, elastic net, and LASSO will set features with small ordinary least squares solution to exactly $0$. In contrast, ridge is dense.} \label{fig:sub4}

\begin{subfigure}{0.4\textwidth}
  \centering
  \includegraphics[width=1\linewidth]{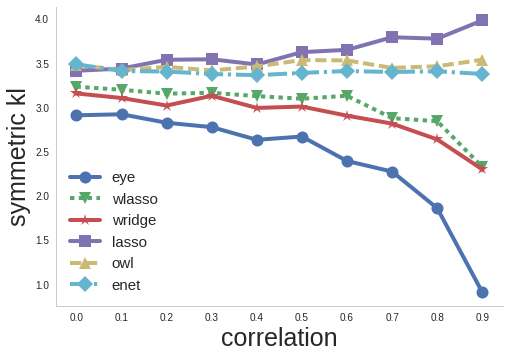}
	\caption{}
  \label{fig:fig4}
\end{subfigure}
\begin{subfigure}{0.4\textwidth}
  \centering
  \includegraphics[width=1\linewidth]{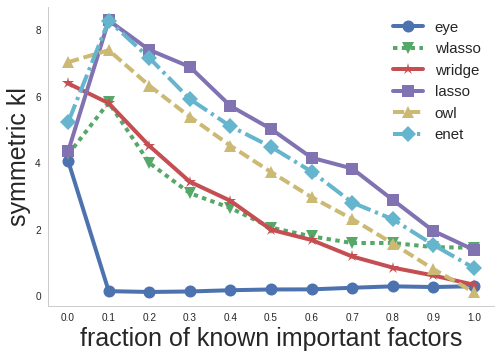}
	\caption{}
  \label{fig:fig5}
\end{subfigure}
\caption{ Comparisons of EYE with other methods under various settings \textbf{(a)} EYE leads to the most credible models in all correlations. \textbf{(b)} EYE leads to the most credible model for all shapes of $\boldsymbol{r}$.}
\end{figure}


We generate $10$ groups of data, each having $30$ features, with $15$ in $\mathcal{K}$. We assigned each group a correlation score from $0$ to $0.9$ (here, we exclude the perfectly correlated case as it will be examined in detail in the next experiment). Intra-group feature correlations are fixed to the group's correlation score, while inter-group feature correlations are $0$. 

\textbf{Figure \ref{fig:fig4}} plots the {\small $symKL$} for each group. Moving from left to right, the correlation increases in step size of $0.1$ from $0$ to $0.9$. As correlation increases, the EYE regularized model achieves the smallest {\small $symKL$}, and becomes the most credible model. In comparison, the other approaches do not achieve the same degree of credibility though, weighted LASSO and weighted ridge do exhibit a similar trend. However, since weighted LASSO fails to capture denseness in known important factors and weighted ridge fails to capture sparseness in unknown features, EYE leads to a more credible model. As correlation increases, LASSO actually produces a less credible model (as expected). 

\subsubsection{Varying Percentage of Known Important Factors}
Besides varying correlation, we also vary the percentage of known important factors within a group of correlated features. We observe that EYE is consistently better than other methods.

In this experiment, we generate groups of data $\mathcal{C}_i$ where $i=0,...,10$, each having $10$ features. Features in each group are perfectly correlated, and features across groups are independent. Each group has a different number of features in {\small $\mathcal{K}$}, \eg, group $0$ has $0$ known relevant factors and group $10$ has $10$ known important factors. 

\textbf{Figure \ref{fig:fig5}} plots the {\small $symKL$} for each group of features. The groups are sorted by {\small $|\mathcal{C}_i\cap\mathcal{K}|$}. When {\small $|\mathcal{C}_i\cap\mathcal{K}|=0$}, the model should be sparse. Indeed, for group $0$, we observe that EYE, LASSO, and weighted LASSO do equally well (EYE in fact degenerates to LASSO in this case), closely followed by elastic net. Weighted ridge and OWL, on the other hand, do poorly since they encourage dense models. For other groups, EYE penalty achieves the best result (lowest {\small $symKL$}). This can be explained by property \ref{property2} as EYE sets the weights the same for correlated features in {\small $\mathcal{K}$} while zeroing out weights in {\small $\mathcal{D} \setminus \mathcal{K}$}. Again, LASSO performed the worst overall because it ignores $\boldsymbol{r}$ and is sparse even when $\boldsymbol{r}$ is dense.


\subsubsection{Varying Accuracy of Expert Knowledge}

The experiments above only test cases where $\boldsymbol{\theta}$ is elementwise positive and where expert knowledge is correct (\ie, the features identified by the expert were indeed relevant). To simulate a more general scenario in which the expert may be wrong, we use the following generative process:

{\small
\begin{enumerate}[leftmargin=*,topsep=0.5pt]
\itemsep0em 
\item Select the number of independent groups, $n$ $\sim$ Poisson($10$)
\item For each group $i$ in $n$ groups
\begin{enumerate}[nolistsep]
\item Sample a group weight, $w^{(i)}$ $\sim$ Normal($0$,$1$)
\item Sample the number of features, $m^{(i)}$ $\sim$ Poisson($20$)
\item Sample known important factor indicator array, $\boldsymbol{r}^{(i)}$ $\sim$ Bernoulli($0.5$)$^{m^{(i)}}$
\item Assign true relevance $\boldsymbol{\theta}^{(i)} \in \mathbb{R}^{m^{(i)}}$ by distributing $w^{(i)}$ according to $\boldsymbol{r}^{(i)}$ (\eg, if $w^{(i)}=3$ and $\boldsymbol{r}^{(i)}=[0,1,1]$, then $\boldsymbol{\theta}^{(i)}=[0,1.5,1.5]$)
\end{enumerate}
\item Generate covariance matrix $C$ such that intra-group feature correlation=$0.95$ and inter-group feature correlation=$0$
\item Generate $5000$ i.i.d. samples $\boldsymbol{x}_i\in \mathbb{R}^{\sum_{i=1}^{n} m^{(i)}}$ $\sim$ Normal($\boldsymbol{0}, C$)
\item Choose label $y_i$ $\sim$ Bernoulli{\small $(sigmoid (\boldsymbol{\theta}^\top \boldsymbol{x}_i))$} where $\boldsymbol{\theta}$ is the concatenated array from $\boldsymbol{\theta}^{(i)}$
\end{enumerate}
}

Generating data this way covers cases where expert knowledge is wrong as feature group relevance and $\boldsymbol{r}$ are independently assigned. It also allows the number of features and weights for each group to be different. \textbf{Table \ref{table1}} summarizes performance and credibility for each method averaged across 100 runs. EYE achieves the lowest sum of {\small $symKL$} for each group of correlated features. In terms of AUC, the best models for each penalty are comparable, confirming that EYE is able to recover from the expert's mistakes.

\begin{table}[t]
        \centering
        \caption{EYE leads to the most credible model on a synthetic dataset (mean $\pm$ stdev)} \label{table1}
        \begin{tabular}{lcc}
            \toprule
            Method     & $\sum_{g=1}^{n} symKL_g$ & AUC\\ 
            \midrule
            EYE 	& \textbf{0.442} $\pm$ 0.128 &     $0.900\pm0.044$ \\
	        wLASSO &	$0.929 \pm 0.147$ & $0.898\pm0.044$\\
	        wridge 	& $1.441 \pm 0.241$ & $0.899 \pm 0.045$ \\
    	    LASSO & $2.483 \pm 0.440$ & $0.898\pm 0.044$ \\
    	    elastic net & $2.673 \pm 0.399$ & $0.893\pm 0.044$ \\
    	    OWL & $3.125 \pm 0.329$ & $0.900\pm0.044$ \\
            \bottomrule
        \end{tabular}
\end{table}

\begin{table*}[t]
   
\centering
    \begin{threeparttable}
        \centering
        \caption{EYE leads to the most credible model on both the \textit{C. diff} and \textit{PhysioNet Challenge} datasets; it keeps more of the factors identified in the clinical literature, while performing on par with other regularization techniques; it also has very sparse weights, second only to the model that just uses features in the risk factors}\label{table2}
        \begin{tabular}{lccccccc}
            \toprule
             & \multicolumn{3}{c}{\textit{C. diff}} & & \multicolumn{3}{c}{\textit{PhysioNet Challenge}}\\
                         \midrule
            Method     & AP & AUC & sparsity$^+$  & \ \ \ \ \ \ \ \ \ \  \ \ \ \  \ \ \ & AP & AUC & sparsity$^+$ \\
            \midrule
            expert-features-only \ \ \ \  \ \ \ \  \ \ \ \  \ \ \ \ \ \ \ \ \ & $1^*$ & 0.598 & \textbf{0.998} & &$1^*$ & 0.754 & \textbf{0.877} \\
            EYE	    & \textbf{0.204} &  0.753 & 0.980&     & \textbf{0.671} &  0.815 & 0.794\\
            wLASSO	& 0.033 & 0.764 & 0.884& & 0.300 & 0.810 & 0.824 \\
            LASSO	& 0.032 & 0.760 & 0.856& & 0.131 & \textbf{0.823} & 0.779\\
            wridge	& 0.031 & \textbf{0.768} & 0.755& & 0.209 & 0.810 & 0.069\\
            elastic net	& 0.031 & 0.754 & 0.880& & 0.153 & 0.818 & 0.649\\
    	    EYE-random-r & 0.031 & 0.748 & 0.936& & 0.589 & 0.792 & 0.779\\
    	    OWL	    & 0.028 & 0.548 & 0.544&     & 0.108 & 0.794 & 0.046\\
            \bottomrule
       \end{tabular}
        \begin{tablenotes}
            \small 
            \item $^+$ percentage of near-zero feature weights, where near-zero is defined as $<0.01$ of the largest absolute feature weight
            \item * expert-features-only logistic regression trivially achieves AP of $1$ simply because it only uses expert features
        \end{tablenotes}
                   \end{threeparttable}
\end{table*}

\subsection{Application to a Real Clinical Prediction Task} \label{real_experiment}

After verifying desirable properties in synthetic datasets, we apply EYE to a large-scale clinical classification task. In particular, we consider the task of identifying patients at greatest risk of acquiring an infection during their hospital stay.  We selected a task from healthcare since credibility and interpretability are critical to ensuring the safe adoption of such models. We focus on predicting which patients will acquire a \textit{Clostridium difficile} infection (CDI), a particularly nasty healthcare-associated infection. Using electronic health record (EHR) data from a large academic US hospital, we aim to learn a credible model that produces accurate \textit{daily} estimates of patient risk for CDI. 

\subsubsection{The Dataset.}
We consider all adult hospitalizations between 2010 and 2015. We exclude hospitalizations in which the patient is discharged or diagnosed with CDI before the 3rd calendar day, since we are interested in healthcare-acquired infections (as opposed to community-acquired). Our final study population consists of $143,602$ adult hospitalizations. Cases of CDI are clinically diagnosed by positive laboratory test. We label a hospitalization with a positive laboratory test for CDI as +1, and 0 otherwise. $1.09\%$ of the study population is labeled positive.

\subsubsection{The Task.}
We frame the problem as a prediction task: the goal is to predict whether or not the patient will be clinically diagnosed with CDI at some point in the future during their visit. In lieu of a single prediction at 24 hours, we make predictions every 24 hours. To generate a single AUC given multiple predictions per patient, we classify patients as high-risk if their risk ever exceeds the decision threshold, and low-risk otherwise. By sweeping the decision threshold, we generate a single receiver operating characteristic curve and a single AUC in which each hospitalization is represented exactly once.

\subsubsection{Feature Extraction.}
We use the same feature extraction pipeline as described in \cite{Oh_2018}. In particular, we extract high-dimensional feature vectors for each day of a patient's admission from the structured contents of the EHR (\textit{e.g.}, medication, procedures, in-hospital locations etc.). Most variables are categorical and are mapped to binary features. Continuous features are either binned by quintiles or well-established reference ranges (\textit{e.g.}, a normal heart rate is 60-100 beats per minute). If a feature is not measured (\textit{e.g.}, missing vital), then we explicitly encode this missingness. Finally, we discard rare features that are not present in more than .05\% of the observations. This feature processing resulted in 4,739 binary variables. Of these variables, 264 corresponded to known risk factors. We identified these variables working with experts in infectious disease who identified key factors based on the literature \cite{garey2008clinical,dubberke2011development,wiens2014learning}.

\subsubsection{Analysis.}
We train and validate the models on data from the first five years (n=$444,184$ days), and test on the held-out most recent year (n=$217,793$ days). Using the training data, we select hyperparameters using a grid search for $\lambda$ and $\beta$ from $10^{-10}$ to $10^{10}$ and $0$ to $1$ respectively. The final hyperparameters are selected based on model performance and sparsity as detailed in section \ref{measure_credibility}. 

For each regularization method, we report the AUC on the held-out test set, and the average precision (AP) between {\small $|\boldsymbol{\hat \theta}|$} and $\boldsymbol{r}$ (see Section \ref{measure_credibility}). \textbf{Table \ref{table2}} summarizes the results on the test set with various regularizations. 

Relative to the other common regularization techniques, EYE achieves an AP that is an order of magnitude higher, while maintaining good predictive performance. Moreover, EYE leads to one of the sparsest models, increasing model interpretability.

For comparison, we include a model based on only the 264 expert features (trained using $l_2$ regularized logistic regression) ``expert-features-only.'' This baseline trivially achieves AP of $1$, since it only uses expert features, but performs poorly relative to the other tasks. This confirms that simply retaining expert features is not enough to solve this task.

In addition, we include a baseline, "EYE-random-r", in which we randomly permuted $\boldsymbol{r}$. This corresponds to the setting where the expert is incorrect and is providing information about features that may be irrelevant. In this setting, EYE achieves a high AUC and low AP. This confirms that EYE is not severely biased by incorrect expert knowledge. Moreover, we believe this to be a feature of the approach, since it can highlight settings in which the data and expert disagree.


\subsection{Application to PhysioNet Challenge Dataset}

To further validate our approach, we turn to a publicly available benchmark dataset from PhysioNet \cite{PhysioNet}. In this task, the goal is to predict in-hospital mortality using EHR data collected in intensive care units (ICUs). Similar to above using the EYE penalty we trained a model and evaluated it in terms of predictive performance, average precision (AP), and model sparsity. 

\subsubsection{The Dataset.} We use the ICU data provided in the PhysioNet Challenge 2012 \cite{silva2012predicting} to train our model. This challenge utilizes a subset of the MIMIC-III dataset. We focus on this subset rather than using the entire dataset, since the goal is not to achieve state-of-the-art in in-hospital mortality prediction, but simply to evaluate the performance of the EYE penalty. The challenge data consist of three sets, each set containing data for 4000 patients. In our experiments, we use set A, since it is the only publicly labeled subset. We split the data randomly, reserving $25\%$ as the held-out test set.

\subsubsection{The Task.}  Using data collected during the first two days of an ICU stay, we aim to predict which patients survive their hospitalizations, and which patients do not. In contrast to the \textit{C. diff} task, here, we make a single prediction per patient at 48 hours. 

\subsubsection{Feature Extraction.} The PhysioNet challenge dataset has considerably fewer features relative to the earlier task. In total, for each patient the data contain four general descriptors (\eg, age) and 37 time-varying variables (\eg, glucose, pH, etc.) measured possibly multiple times during the first 48 hours of the patient's ICU stay. We describe our feature extraction process below. Since again the goal was not state-of-the-art prediction on this particular task, we performed standard preprocessing without iteration/optimization.

We represent each patient by a vector containing 130 features. More specifically, for each time-varying variable we compute the maximum, mean, and minimum over the 48 hour window, yielding 111 features. In addition, for each of the 15 time-varying variables used in the Simplified Acute Physiology Score (SAPS-I) \cite{le1984simplified} we extract the most abnormal value observed within the first 24 hours,based on the SAPS scoring system. We concatenate these 126 features along with the 4 general descriptors producing a final vector of length 130. 
Out of the 130 variables, we consider the 15 SAPS-I variables along with age as expert knowledge. SAPS-I is a scoring system used to predict ICU mortality in patients greater than the age of 15 and thus corresponds to factors believed to increase patient risk.

\subsubsection{Analysis.}

Using the training data, we select hyperparameters in the same way we did earlier. As with the previous experiment on the \textit{C. diff} dataset, for each regularization method, we report both AUC and AP on the held-out test set for this task. Again, we compared the model learned using the EYE penalty to the other baselines. \textbf{Table \ref{table2}} summarizes our results on the held-out test set.

Overall, we observed a similar trend as to what we observed for the \textit{C. diff} dataset. Compared to the other common regularization techniques, EYE achieves significantly higher AP and results in a sparse model. In terms of discriminative performance it performs on par with the other techniques. Again, we see that a model based on the expert features alone (i.e., \textit{expert-features-only}) performs worse than the other regularization techniques. However, the difference in performance is not as striking as it was earlier. This suggests that perhaps the additional features (beyond the 16 SAPS-I features) do not provide much complementary information. Interestingly, the model using randomly permuted $\boldsymbol{r}$ ("EYE-random-r")  achieves high AUC and AP. We suspect this may be due to the amount of collinearity present in the data. The non-expert and expert features are highly correlated with one another and thus both subsets are predictive (\textit{i.e.}, supported by the data). 

\section{Discussion \& Conclusion} \label{discussion}
In this work, we extended the notion of interpretability to credibility and presented a formal definition of credibility in a linear setting. We proposed a regularization penalty, EYE, that encourages such credibility. Our proposed approach incorporates domain knowledge about which factors are known (or believed) to be important. Our incorporation of expert knowledge results in increased credibility, encouraging model adoption, while maintaining model performance. Through a series of experiments on synthetic data, we showed that sparsity inducing regularization such as LASSO, weighted LASSO, elastic net, and OWL do not always produce credible models. In contrast, EYE produces a model that is provably credible in the least squares regression setting, and one that is consistently credible across a variety of settings. 

Applied to two large-scale patient risk stratification tasks, EYE produced a model that was significantly better at highlighting known important factors, while being comparable in terms of predictive performance with other regularization techniques. Moreover, we demonstrated how the proposed approach does not lead to worse performance when the expert is wrong. This is especially important in a clinical setting, where some relationships between variables and the outcome of interest may be less well-established. 

There are several important limitations of the proposed approach. We focused on a linear setting and one form of expert knowledge. In the future, we plan to extend the notion of credibility to other settings. Furthermore, we do not claim that EYE is the optimal approach to yield credibility (we give no proof on that). Compared to other regularization penalties considered in this paper, EYE introduces the least amount of bias, while striving to attain credibility. 

While interpretable models have garnered attention in recent years, increased interpretability should not have to come at the expense of decreased credibility. Predictive performance and sparsity being equal, a data-driven model that reflects what is known or well-accepted in one’s domain (in addition to what is unknown, but reflected in the data) is preferred over a purely data-driven model that highlights unusual features due to collinearity in the data. Moreover, correlations can be fragile and break over time; thus, credible models that select those features that are known to be associated with the outcome of interest may also be more robust to such changes over time. 

Finally, though we focused on credibility, our proposed regularization technique could be extended to other settings in which the user would like to guide variable selection. For example, instead of encoding knowledge pertaining to which variables are known risk factors, $\mathbf{r}$ could encode information about which variables are actionable. This in turn could lead to more \textit{actionable} models.

\section{Acknowledgement}

This work was supported by the National Science Foundation (NSF award no. IIS-1553146); the National Institute of Allergy and Infectious Diseases of the National Institutes of Health (grant no. U01AI124255). The views and conclusions in this document are those of the authors and should not be interpreted as necessarily representing the official policies, either expressed or implied, of the National Science Foundation nor the National Institute of Allergy and Infectious Diseases of the National Institutes of Health.

\bibliographystyle{ACM-Reference-Format}
\balance
\bibliography{ref.bib}

\newpage
\section{Appendix}

This Appendix includes details of the proofs for properties in \ref{properties}. We assume $\lambda > 0$ because otherwise the model is not regularized.

\subsection{Derivation of original EYE penalty}

First note that $\left \{\boldsymbol{x} \mid q(\boldsymbol{x})=c \right \}$ is the convex contour plot of $q$ for $c \in \mathbb{R}$. We set $c$ so that the slope in the first quadrant between known important factor and unknown feature is $-1$. 

Since we only care about the interaction between known and unknown risk factors and that the contour is symmetric about the origin, WLOG, let y be the feature of unknown importance and x be the known important factor and $y \geq 0$, $x \geq 0$.


 \begin{alignat}{2}
 &2 \beta y + (1-\beta) x^2 = c \notag\\
 &\Rightarrow \quad y = \frac{c}{2\beta} - \frac{(1-\beta) x^2}{2 \beta} \notag\\
 &\Rightarrow \quad y = 0 \Rightarrow x = \sqrt{\frac{c}{1-\beta}} \notag\\ 
 &\Rightarrow \quad f'(x) = -\frac{(1-\beta)}{\beta}x \notag\\
 &\Rightarrow \quad f'(\sqrt{\frac{c}{1-\beta}}) = -\frac{1-\beta}{\beta} \sqrt{\frac{c}{(1-\beta)}} = -1 \notag\\
 &\Rightarrow \quad c = \frac{\beta^2}{1-\beta} \notag\\
 &\Rightarrow \quad 2 \beta y + (1-\beta) x^2 = \frac{\beta^2}{1-\beta}
\end{alignat}

Thus, we just need $q(\boldsymbol{x}) = \frac{\beta^2}{1-\beta}$. The rest deals with scaling of the level curve. We define EYE penalty as a an atomic norm $ \Vert \cdot \Vert_A$ introduced in \cite{chandrasekaran2012convex}: $ \Vert \boldsymbol{x} \Vert_A := \inf \left \{t>0 \mid \boldsymbol{x} \in t conv(A) \right \}$ where $conv$ is the convex hull operator of its argument set $A$.

Let $A= \left \{\boldsymbol{x} \mid q(\boldsymbol{x}) \leq \frac{\beta^2}{1-\beta} \right \}$. Using the fact that the sublevel set of q is convex, we have
\begin{equation}
eye(\boldsymbol{x}) = \inf \left \{t>0 \mid \boldsymbol{x} \in \left \{t \boldsymbol{x} \mid q(\boldsymbol{x}) \leq \frac{\beta^2}{1-\beta} \right \} \right \}
\end{equation}

\subsection{EYE has no extra parameter} \label{beta_free}

To show $\beta$ is unused in EYE, we show that $\beta$ conserves the shape of the contour, because the scaling of EYE can be absorbed in to $\lambda$. 

\begin{proof}

 Consider the contour $B_1 = \left \{\boldsymbol{x}: eye_{\beta_1}(\boldsymbol{x}) = t \right \}$ and $B_2 = \left \{\boldsymbol{x}: eye_{\beta_2}(\boldsymbol{x}) = t \right \}$

  We want to show $B_1$ is similar to $B_2$

  case1: $t = 0$, then $B_1$ = $B_2 = \{0\}$ because EYE is a norm.
  
  case2: $t \neq 0$

  we can equivalently write $B_1$ and $B_2$ as

  $B_1 = t \left \{\boldsymbol{x}: \boldsymbol{x} \in \left \{\boldsymbol{x} \mid q_{\beta_1}(\boldsymbol{x}) = \frac{\beta_1^2}{1-\beta_1} \right \} \right \}$

  $B_2 = t \left \{\boldsymbol{x}: \boldsymbol{x} \in \left \{\boldsymbol{x} \mid q_{\beta_2}(\boldsymbol{x}) = \frac{\beta_2^2}{1-\beta_2} \right \} \right \}$

  let $B_1' = \left \{\boldsymbol{x}: \boldsymbol{x} \in \left \{\boldsymbol{x} \mid q_{\beta_1}(\boldsymbol{x}) = \frac{\beta_1^2}{1-\beta_1} \right \} \right \}$
  and 
  
  $B_2' = \left \{\boldsymbol{x}: \boldsymbol{x} \in t \left \{\boldsymbol{x} \mid q_{\beta_2}(\boldsymbol{x}) = \frac{\beta_2^2}{1-\beta_2} \right \} \right \}$

  \begin{claim}
  $B_2' = \frac{\beta_2 (1-\beta_1)}{\beta_1 (1-\beta_2)} B_1'$
  \end{claim}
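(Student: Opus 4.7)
The plan is to show that each $B_\beta'$, although defined as a level set of the non-homogeneous function $q$, actually coincides with a level set of the homogeneous EYE norm. Once that identification is made, the claim reduces to the fact that the level sets of a norm are all positive dilates of one another, and the required scaling factor drops out by taking the ratio of the two level values.

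Concretely, I would reparametrize points on $B_\beta'$ by the two functionals that appear in the penalty, $a = \Vert \boldsymbol{r}\odot\boldsymbol{x}\Vert_2^2$ and $b = \Vert (\boldsymbol{1}-\boldsymbol{r})\odot\boldsymbol{x}\Vert_1$. The defining equation $(1-\beta)a + 2\beta b = \beta^2/(1-\beta)$ is linear in $(a,b)$ and pins $a$ down as an affine function of $b$. Substituting into the closed form $eye(\boldsymbol{x}) = b + \sqrt{b^2 + a}$ from Equation~(\ref{eye-defn}), the radicand should collapse to the perfect square $\bigl(\beta/(1-\beta) - b\bigr)^2$. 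The nonnegativity constraint $a \geq 0$ forces $b \leq \beta/(2(1-\beta))$, which fixes the sign of the absolute value and yields $eye(\boldsymbol{x}) = \beta/(1-\beta)$ identically on $B_\beta'$. Running the same algebra in reverse shows that every $\boldsymbol{x}$ with $eye(\boldsymbol{x}) = \beta/(1-\beta)$ and $a \geq 0$ satisfies the $q$-equation, so in fact $B_\beta' = \{\boldsymbol{x} : eye(\boldsymbol{x}) = \beta/(1-\beta)\}$. Applying positive homogeneity of the EYE norm then gives $B_2' = \bigl(\tfrac{\beta_2/(1-\beta_2)}{\beta_1/(1-\beta_1)}\bigr) B_1' = \tfrac{\beta_2(1-\beta_1)}{\beta_1(1-\beta_2)} B_1'$, which is the claimed scaling.

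The delicate step is the algebraic recognition that the $q$-constraint turns the radicand inside EYE into a perfect square; this is what bridges the non-homogeneous $q$-contour and the homogeneous EYE norm, and without it the clean scaling relationship would not be visible. Everything after that is bookkeeping through homogeneity, which is already available because EYE was shown earlier to be a norm via its atomic-norm representation.
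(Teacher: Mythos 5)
Your proof is correct, but it takes a genuinely different route from the paper's. The paper argues by direct substitution: it takes $\boldsymbol{x} \in B_1'$, defines $\boldsymbol{x'} = \frac{\beta_2(1-\beta_1)}{\beta_1(1-\beta_2)}\boldsymbol{x}$, and expands $q_{\beta_2}(\boldsymbol{x'})$ term by term (the $l_1$ part scales linearly, the squared $l_2$ part quadratically) to check that it collapses to $\frac{\beta_2^2(1-\beta_1)}{\beta_1^2(1-\beta_2)} q_{\beta_1}(\boldsymbol{x}) = \frac{\beta_2^2}{1-\beta_2}$, giving one inclusion, with the reverse inclusion by symmetry. You instead show that each $B_\beta'$ is exactly the level set $\{\boldsymbol{x} : b + \sqrt{b^2+a} = \beta/(1-\beta)\}$ with $a = \Vert \boldsymbol{r}\odot\boldsymbol{x}\Vert_2^2$, $b = \Vert(\boldsymbol{1}-\boldsymbol{r})\odot\boldsymbol{x}\Vert_1$, and then invoke homogeneity; the perfect-square computation and the sign resolution via $b \leq \beta/(2(1-\beta))$ both check out, as does the converse direction. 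Your route is arguably more illuminating: it explains \emph{why} the level $\beta^2/(1-\beta)$ is special and essentially derives the closed form of Equation~(\ref{eye-defn}) as a byproduct, whereas the paper's verification is shorter but leaves the constants unmotivated. One caveat: the paper only establishes the equivalence of Equation~(\ref{eye-defn}) with Equation~(\ref{eye-defn-orig}) in a \emph{later} subsection, and that later proof itself cites the $\beta$-free claim you are proving here, so appealing to "the closed form from Equation~(\ref{eye-defn})" and to "EYE was shown to be a norm via its atomic-norm representation" risks circularity. The fix is trivial and you should make it explicit: treat $N(\boldsymbol{x}) := b + \sqrt{b^2+a}$ as an auxiliary function whose positive homogeneity ($b \mapsto tb$, $a \mapsto t^2 a$, hence $N \mapsto tN$ for $t>0$) is immediate from the formula itself, without identifying it with the atomic norm of Equation~(\ref{eye-defn-orig}). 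With that rephrasing your argument is self-contained and sound.
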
 

  It should be clear that if this claim is true then $B_1$ is similar to $B_2$ and we are done

  Take $\boldsymbol{x} \in B_1'$, then $q_{\beta_1}(\boldsymbol{x}) = 2 \beta_1  \Vert (\boldsymbol{1}-\boldsymbol{r}) \odot \boldsymbol{x} \Vert_1 + (1-\beta_1)  \Vert \boldsymbol{r} \odot \boldsymbol{x} \Vert_2^2 = \frac{\beta_1^2}{1-\beta_1}$

  let $\boldsymbol{x'} = \frac{\beta_2 (1-\beta_1)}{\beta_1 (1-\beta_2)} \boldsymbol{x}$

{\small
  \begin{align*}
  q_{\beta_2}(\boldsymbol{x'}) &= 2 \beta_2  \Vert (\boldsymbol{1}-\boldsymbol{r}) \odot \boldsymbol{x'} \Vert_1 +
  (1-\beta_2)  \Vert \boldsymbol{r} \odot \boldsymbol{x'} \Vert_2^2\\
  &= \frac{2 \beta_2^2 (1-\beta_1)}{\beta_1 (1-\beta_2)}  \Vert (\boldsymbol{1}-\boldsymbol{r}) \odot \boldsymbol{x} \Vert_1 + 
  \frac{\beta_2^2 (1-\beta_1)^2}{\beta_1^2 (1-\beta_2)}  \Vert \boldsymbol{r} \odot \boldsymbol{x} \Vert_2^2\\
  &= \frac{\beta_2^2 (1-\beta_1)}{\beta_1^2 (1-\beta_2)} (2 \beta_1  \Vert (\boldsymbol{1}-\boldsymbol{r}) \odot \boldsymbol{x} \Vert_1 +
  (1-\beta_1)  \Vert \boldsymbol{r} \odot \boldsymbol{x} \Vert_2^2)\\
  &= \frac{\beta_2^2 (1-\beta_1)}{\beta_1^2 (1-\beta_2)} \frac{\beta_1^2}{1-\beta_1} \\
  &= \frac{\beta_2^2}{1-\beta_2}
  \end{align*}
}
  so $\boldsymbol{x'} \in B_2'$. Thus $\frac{\beta_2 (1-\beta_1)}{\beta_1 (1-\beta_2)} B_1' \subset B_2'$. The other direction is similarly proven.

\end{proof}

\subsection{Equivalence with the triangular form of EYE penalty}

In this section, we prove \textbf{Equation (\ref{eye-defn-orig})} and \textbf{(\ref{eye-defn})} are equivalent.

\begin{proof} 
     
Since $\beta$ can be arbitrarily set (\ref{beta_free}), fix $\beta$=0.5, then \textbf{Equation (\ref{eye-defn-orig})} becomes

\begin{equation}
\resizebox{.47\textwidth}{!}{$
     eye(\boldsymbol{x}) = \inf \left \{t>0 \mid \boldsymbol{x} \in t \left \{ \boldsymbol{x} \mid 
     2  \Vert (\boldsymbol{1}-\boldsymbol{r}) \odot \boldsymbol{x} \Vert_1 +  \Vert \boldsymbol{r} \odot \boldsymbol{x} \Vert_2^2 = 1 \right \} \right \}
$}
\end{equation}

Assume $\boldsymbol{x} \neq 0$ and denote

$eye(\boldsymbol{x}) := t$, then $\boldsymbol{x} \in t \left \{ \boldsymbol{x} \mid  2  \Vert (\boldsymbol{1}-\boldsymbol{r}) \odot \boldsymbol{x} \Vert_1 +  \Vert \boldsymbol{r} \odot \boldsymbol{x} \Vert_2^2 = 1 \right \}$, that is $\frac{2 \Vert (\boldsymbol{1}-\boldsymbol{r}) \odot \boldsymbol{x} \Vert_1}{t} + \frac{ \Vert \boldsymbol{r} \odot \boldsymbol{x} \Vert_2^2}{t^2} = 1$
     
As this is a quadratic equation in t and from assumption we know $t>0$ (EYE being a norm and $\boldsymbol{x} \neq 0$), solving for $t$ yields:     

\begin{equation} \label{tmp-derivation}
     t = \Vert (\boldsymbol{1}-\boldsymbol{r}) \odot \boldsymbol{x} \Vert_1 + 
     \sqrt{ \Vert (\boldsymbol{1}-\boldsymbol{r}) \odot \boldsymbol{x} \Vert_1^2 +  \Vert \boldsymbol{r} \odot \boldsymbol{x} \Vert_2^2}
 \end{equation}
     
Note that in the event $\boldsymbol{x}=0$, $t=0$, \textbf{Equation (\ref{tmp-derivation})} agrees with the fact that $eye(\boldsymbol{0})=0$. Thus \textbf{Equation (\ref{eye-defn})} and \textbf{(\ref{eye-defn-orig})} are equivalent.
\end{proof}

\subsection{Sparsity with Orthonormal Design Matrix}

We consider a special case of regression and orthogonal design matrix ($X^\top X = I$) with EYE regularization. This restriction allows us to obtain a closed form solution so that key features of EYE penalty can be highlighted. With \textbf{Equation (\ref{eye-defn})}, we have
  
     \begin{equation} \label{regression-obj}
     \resizebox{.47\textwidth}{!}{$
   \min_{\boldsymbol{\theta}} \frac{1}{2} \Vert \boldsymbol{y} - X \boldsymbol{\theta} \Vert_2^2 + n \lambda 
   \left ( \Vert (\boldsymbol{1}-\boldsymbol{r}) \odot \boldsymbol{\theta} \Vert_1 + 
   \sqrt{\Vert (\boldsymbol{1}-\boldsymbol{r}) \odot \boldsymbol{\theta} \Vert_1^2 + 
    \Vert \boldsymbol{r} \odot \boldsymbol{\theta} \Vert_2^2} \right )
    $}
   \end{equation}
   
   Since the objective is convex, we solve for its subgradient $\boldsymbol{g}$.
   
   \begin{equation}  \label{orthog-general}
   \resizebox{.47\textwidth}{!}{$
   \boldsymbol{g} = X^\top X \boldsymbol{\theta} - X^\top \boldsymbol{y} + n \lambda (\boldsymbol{1}-\boldsymbol{r}) \odot \boldsymbol{s} + 
   \frac{n\lambda}{Z} (\Vert (\boldsymbol{1}-\boldsymbol{r}) \odot \boldsymbol{\theta} \Vert_1 (\boldsymbol{1}-\boldsymbol{r}) \odot \boldsymbol{s} +  \boldsymbol{r} \odot \boldsymbol{r} \odot \boldsymbol{\theta})
   $}
   \end{equation}
   
   where $s_i = sgn(\theta_i)$ if $\theta_i \neq 0$, 
   $s_i \in [-1,1]$ if $\theta_i =0$, and
   $Z = \sqrt{\Vert (\boldsymbol{1}-\boldsymbol{r}) \odot \boldsymbol{\theta} \Vert_1^2 + 
    \Vert \boldsymbol{r} \odot \boldsymbol{\theta} \Vert_2^2}$.
   
   By our assumption $X^\top X = I$, and the fact that 
   $\boldsymbol{\hat \theta}^{OLS} = (X^\top X)^{-1} X^\top \boldsymbol{y} = X^\top \boldsymbol{y}$
   (the solution for oridinary least squares), we simplify (\ref{orthog-general}) as 

   \begin{equation}
   \resizebox{.47\textwidth}{!}{$
   \boldsymbol{g} = \boldsymbol{\theta} - \boldsymbol{\hat \theta}^{OLS} + n \lambda (\boldsymbol{1}-\boldsymbol{r}) \odot \boldsymbol{s} + 
   \frac{n \lambda}{Z} (\Vert (\boldsymbol{1}-\boldsymbol{r}) \odot \boldsymbol{\theta} \Vert_1 (\boldsymbol{1}-\boldsymbol{r}) \odot \boldsymbol{s} + \boldsymbol{r} \odot \boldsymbol{r} \odot \boldsymbol{\theta})
   $}
   \end{equation}
   
   setting $\boldsymbol{g}$ to $\boldsymbol{0}$ we have

   
   
   {\small
   \begin{equation} \label{orthog-theta}
   \hat \theta_i = \frac{\hat \theta_i^{OLS}}{1+\frac{n \lambda}{Z} r_i^2}
   \max \left ( 0, 
   1-\frac{n \lambda (1-r_i) \left (1+\frac{\Vert (\boldsymbol{1}-\boldsymbol{r}) \odot \boldsymbol{\hat \theta} \Vert_1}{Z} \right )}{\left |\hat \theta_i^{OLS}\right |} \right )
   \end{equation}
    }
    
   where $Z = \sqrt{\Vert (\boldsymbol{1}-\boldsymbol{r}) \odot \boldsymbol{\hat \theta} \Vert_1^2 + 
    \Vert \boldsymbol{r} \odot \boldsymbol{\hat \theta} \Vert_2^2}$.

   Note that \textbf{Equation (\ref{orthog-theta})} is still an implicit equation in $\boldsymbol{\theta}$
   because $Z$ is a function of $\boldsymbol{\hat \theta}$. Also, we implicitly assumed that $Z \neq 0$.
   
    Although this is an implicit equation for $\theta_i$, the max term confirms EYE's ability to set weights to exactly zero in the orthonormal design matrix setting.

	What if $Z=0$? This only happens if $\boldsymbol{\theta}=\boldsymbol{0}$. However, by the complementary slackness condition in KKT, we know $\lambda>0$ implies that the solution is on the boundary of the constraint formulation of the problem (for $\lambda=0$, we are back to ordinary least squares). So long as the optimal solution for the unconstrained problem is not at $\boldsymbol{0}$, we won't get into trouble unless the constraint is $eye(\boldsymbol{\theta}) \leq 0$, which won't happen in the regression setting as $\lambda$ is finite. If the optimal solution for the unconstrained problem is $\boldsymbol{0}$, we are again back to ordinary least squares solutions. So the upshot is we can assume $Z \neq 0$ otherwise it will automatically revert to ordinary least squares.
	
\subsection{Perfect Correlation}

Denote the objective function in \textbf{Equation (\ref{regression-obj})} as $L(\boldsymbol{\theta})$. Assume $\boldsymbol{\hat \theta}$ is the optimal solution, $x_i = x_j$  (\eg, the $i^{th}$ and $j^{th}$ columns of design matrix are co-linear)

\begin{itemize}
\item $r_i = 1$, $r_j = 0$, $x_i = x_j$ $\implies$ $\hat \theta_j = 0$

    Here, we show EYE penalty prefers known risk factors over unknown risk factors.

    \begin{proof}
    Assume $r_i=1$, $r_j=0$.
    
    consider $\boldsymbol{\hat \theta'}$ that only differs from $\boldsymbol{\hat \theta}$ at the $i^{th}$ and $j^{th}$ entry such that $\hat \theta'_i = \hat \theta_i + \hat \theta_j$ and $\hat \theta'_j=0$.
    
    $L(\boldsymbol{\hat \theta}) -L(\boldsymbol{\hat \theta'}) = \frac{1}{2} \Vert \boldsymbol{y}-X \boldsymbol{\hat \theta}\Vert_2^2 +  n\lambda \left ( | \hat \theta_j | + \sqrt{(C+  | \hat \theta_j |)^2 + D + \hat \theta_i^2} \right)  - \frac{1}{2} \Vert \boldsymbol{y}-X \boldsymbol{\hat \theta'} \Vert_2^2 - n\lambda \left( |\hat \theta_j' | + \sqrt{(C+ |\hat \theta_j' |)^2 + D + \hat \theta_i'^2} \right )$
    
    where $C$ and $D$ are non-negative constant involving entries other than $i$ and $j$. Note that the sum of squared residual is the same for both $\boldsymbol{\hat \theta'}$ and $\boldsymbol{\hat \theta}$ owing to the fact that $x_i=x_j$.Use the definition of $\boldsymbol{\hat \theta'}$, we have
    
    \begin{equation*}
    \resizebox{.465\textwidth}{!}{$
    L(\boldsymbol{\hat \theta}) - L(\boldsymbol{\hat \theta'}) = n\lambda \left ( | \hat \theta_j | + \sqrt{(C+ | \hat \theta_j |)^2 + D + \hat \theta_i^2}  - \sqrt{C^2 + D + (\hat \theta_i + \hat \theta_j)^2} \right )
    $}
    \end{equation*}
    
    \begin{claim} \label{claim1}
    $L(\boldsymbol{\hat \theta}) - L(\boldsymbol{\hat \theta'}) \geq 0$ with equality only if $\hat \theta_j=0$
    \end{claim}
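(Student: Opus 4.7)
The plan is to factor the positive constant $n\lambda$ out of the expression for $L(\boldsymbol{\hat\theta}) - L(\boldsymbol{\hat\theta}')$ given just above, so that Claim \ref{claim1} reduces to showing the bracketed quantity is non-negative, with equality only when $\hat\theta_j = 0$. Writing $a := |\hat\theta_j| \geq 0$, the target inequality becomes
\[
  a + \sqrt{(C+a)^2 + D + \hat\theta_i^2} \;\geq\; \sqrt{C^2 + D + (\hat\theta_i + \hat\theta_j)^2},
\]
where $C, D \geq 0$ collect the contributions from the coordinates other than $i$ and $j$ to the $\ell_1$ and $\ell_2$ pieces of EYE, respectively.

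First I would dispose of the trivial $a = 0$ case, in which both sides coincide (giving equality, consistent with the claim). For $a > 0$, both sides are strictly positive, so squaring preserves the inequality. Expanding the square on the left and cancelling the terms $C^2$, $D$, and $\hat\theta_i^2$ that also appear on the right reduces the goal to
\[
  2a\sqrt{(C+a)^2 + D + \hat\theta_i^2} + 2a^2 + 2Ca + \hat\theta_i^2 - (\hat\theta_i + \hat\theta_j)^2 \;\geq\; 0.
\]
Next I would apply the elementary bound $(\hat\theta_i + \hat\theta_j)^2 \leq (|\hat\theta_i| + a)^2 = \hat\theta_i^2 + 2a|\hat\theta_i| + a^2$ to loosen the subtracted cross-term, turning the residual into
\[
  2a\Bigl[\tfrac{a}{2} + C + \sqrt{(C+a)^2 + D + \hat\theta_i^2} - |\hat\theta_i|\Bigr],
\]
which is $\geq 0$ because $\sqrt{(C+a)^2 + D + \hat\theta_i^2} \geq |\hat\theta_i|$ and $C + a/2 \geq 0$. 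Strictness comes from the $a/2 > 0$ factor whenever $a > 0$, which yields the equality condition $\hat\theta_j = 0$.

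The main obstacle is keeping the bookkeeping in the squaring step transparent; the essential observation is that $C$ and $D$ appear symmetrically on both sides once the cross-term has been loosened by the triangle inequality, so nearly everything cancels. An alternative framing worth mentioning is that, because $x_i = x_j$, the squared residual is unchanged when we transfer the mass $\hat\theta_j$ onto coordinate $i$, so the claim is logically equivalent to $eye(\boldsymbol{\hat\theta}) \geq eye(\boldsymbol{\hat\theta}')$ — a kind of monotonicity of EYE under transfer of mass from an unknown to a known coordinate. Either route requires essentially the same calculation, so I would proceed with the direct squaring argument above.
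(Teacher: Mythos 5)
Your proof is correct and follows essentially the same route as the paper's: square the inequality and control the cross term $(\hat\theta_i+\hat\theta_j)^2$ via the triangle inequality, with strictness supplied by the leftover $a^2$ term when $\hat\theta_j\neq 0$. The only (minor, beneficial) difference is that by keeping $|\hat\theta_j|$ on the left you square a manifestly non-negative pair and avoid the paper's case split on the sign of $\sqrt{C^2+D+(\hat\theta_i+\hat\theta_j)^2}-|\hat\theta_j|$.
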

    
    \begin{proof}
    Since $n\lambda$ is positive, the claim is equivalent to
    $$ \sqrt{(C+ | \hat \theta_j |)^2 + D + \hat \theta_i^2} 
    \geq \sqrt{C^2 + D + (\hat \theta_i + \hat \theta_j)^2} - | \hat \theta_j |$$
    
    If the right hand side is negative, we are done since the left hand side is non-negative.
    
    Otherwise, both sides are non-negative. We square them and rearrange to get the equivalent form
    
    $$\hat \theta_j^2 + 2 \hat \theta_i \hat \theta_j \leq 2 |\hat \theta_j| \sqrt{C^2+D+(\hat \theta_i + \hat \theta_j)^2} + 2 C |\hat \theta_j|$$
    
    which is true following
  
    \begin{align}
    \hat \theta_j^2 + 2 \hat \theta_i \hat \theta_j &\leq 2\hat \theta_j^2 + 2 \hat \theta_i \hat \theta_j - \hat \theta_j^2 \label{sq-drop1}\\
    &\leq 2|\hat \theta_j||\hat \theta_i+\hat \theta_j| \label{sq-drop2} \\
    &= 2|\hat \theta_j|\sqrt{(\hat \theta_i+\hat \theta_j)^2}\\
    &\leq 2 |\hat \theta_j| \sqrt{C^2+D+(\hat \theta_i + \hat \theta_j)^2} + 2 C |\hat \theta_j|
    \end{align}
    
    Again if $\hat \theta_j \neq 0$, the inequality is strict from \textbf{Equation (\ref{sq-drop1})} to \textbf{Equation (\ref{sq-drop2})}
  
    \end{proof}
  
    Since we assumed that $\boldsymbol{\hat \theta}$ is optimal, the equality in \ref{claim1} must hold, thus $\hat \theta_j=0$.
  
    \end{proof}

\item $r_i = 1$, $r_j = 1$, $x_i = x_j$ $\implies$ $\hat \theta_i = \hat \theta_j$

    Feature weights are dense in known risk factors

   \begin{proof}
    Assume $\boldsymbol{\hat \theta}$ is optimal, consider $\boldsymbol{\hat \theta'}$ that is the same as $\boldsymbol{\hat \theta}$ except $\hat \theta'_i = \hat \theta'_j= \frac{\hat \theta_j + \hat \theta_j}{2}$.

    Assume $\boldsymbol{\hat \theta} \neq \boldsymbol{\hat \theta'}$: $\hat \theta_i \neq \hat \theta_j$.  Again the sum of residue of for both estimation is unchanged as $x_i=x_j$
    
    \begin{equation*}
    \resizebox{.48\textwidth}{!}{$
    L(\boldsymbol{\hat \theta}) - L(\boldsymbol{\hat \theta'}) = n\lambda \left ( \sqrt{\left(C+|\hat \theta_i|+|\hat \theta_j|\right)^2 +D+\hat \theta_i^2 + \hat \theta_j^2} - \sqrt{\left (C+2\frac{|\hat \theta_i + \hat \theta_j|}{2} \right )^2 + D+ 2 \frac{|\hat \theta_i + \hat \theta_j|^2}{4}} \right)
    $}
    \end{equation*}
    
    which is greater or equal to 
    \begin{equation*}
        \resizebox{.47\textwidth}{!}{$
        n\lambda \left ( \sqrt{ \left (C+|\hat \theta_i|+|\hat \theta_j| \right )^2 +D+\hat \theta_i^2 + \hat \theta_j^2} - \sqrt{\left (C+|\hat \theta_i| + |\hat \theta_j| \right)^2 + D+ \frac{|\hat \theta_i + \hat \theta_j|^2}{2}} \right)
        $}
    \end{equation*}


    Since $$\hat \theta_i^2 + \theta_j^2 - \frac{|\hat \theta_i + \hat \theta_j|^2}{2} = \frac{(\hat \theta_i - \hat \theta_j)^2}{2}>0$$ by assumption that $\hat \theta_i \neq \hat \theta_j$ for the optimal solution. This shows $L(\boldsymbol{\hat \theta}) - L(\boldsymbol{\hat \theta'})>0$, which contradict our assumption.
    
    Thus $\hat \theta_i=\hat \theta_j$ for the optimal solution.
    \end{proof}

\item  $r_i = 0$, $r_j = 0$, $x_i = x_j$ $\implies$ back to LASSO continuum    

    Note that fixing $\theta_k$ $\forall k \not \in \{i,j\}$, solving for $\theta_i$ and $\theta_j$ reduces the problem  to LASSO, thus all properties of LASSO carry over for $\theta_i$ and $\theta_j$. Thus sparsity is maintained in unknown features.
    

    


\end{itemize}
 
 \subsection{General Correlation}

  Grouping effect in elastic net is still present in eye penalty within groups with similar level of risk.

 \begin{theorem}
  if $\hat \theta_i \hat \theta_j > 0$ and design matrix is standardized, then
    \begin{equation*}
    \resizebox{.48\textwidth}{!}{$
  \frac{|r_i^2 \hat \theta_i - r_j^2 \hat \theta_j|}{Z} \leq \frac{\sqrt{2 (1-\rho)} \Vert \boldsymbol{y} \Vert_2}{n\lambda}
  + |r_i-r_j| \left (1+\frac{\Vert (\boldsymbol{1}-\boldsymbol{r}) \odot \hat \theta \Vert_1}{Z} \right )
  $}
  \end{equation*}

  where $Z = \sqrt{\Vert (\boldsymbol{1}-\boldsymbol{r}) \odot \boldsymbol{\hat \theta} \Vert_1^2 + \Vert \boldsymbol{r} \odot \boldsymbol{\hat \theta} \Vert_2^2}$, $\rho$ is the sample covariance between $x_i$ and $x_j$
  \end{theorem}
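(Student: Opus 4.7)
The plan is to mimic the classical grouping-effect argument used for elastic net (Zou--Hastie), adapted to the EYE subgradient derived in the orthonormal section of the Appendix. The same stationarity condition
\[
X^{\top} X \boldsymbol{\hat \theta} - X^{\top}\boldsymbol{y} + n\lambda(\boldsymbol{1}-\boldsymbol{r})\odot \boldsymbol{s} + \frac{n\lambda}{Z}\Bigl(\Vert(\boldsymbol{1}-\boldsymbol{r})\odot\boldsymbol{\hat\theta}\Vert_1\,(\boldsymbol{1}-\boldsymbol{r})\odot \boldsymbol{s} + \boldsymbol{r}\odot\boldsymbol{r}\odot\boldsymbol{\hat\theta}\Bigr)=\boldsymbol{0}
\]
that was written down for the orthonormal case holds at the optimum in the general case as well, because it is just the subgradient of the objective in \textbf{Equation (\ref{regression-obj})} set to zero. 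I would invoke this as the starting point.

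Next I would read off the $i$-th and $j$-th coordinate equations and subtract them. Because $\hat\theta_i \hat\theta_j>0$, both signs $s_i$ and $s_j$ are equal (and nonzero), so after subtraction the $(\boldsymbol{1}-\boldsymbol{r})\odot\boldsymbol{s}$ terms collapse into a common factor $(r_i-r_j)\,s$. Solving for the quantity of interest yields something of the form
\[
\frac{n\lambda}{Z}\bigl(r_i^{2}\hat\theta_i - r_j^{2}\hat\theta_j\bigr) = -(\boldsymbol{x}_i-\boldsymbol{x}_j)^{\top}(X\boldsymbol{\hat\theta}-\boldsymbol{y}) + n\lambda(r_i-r_j)\,s\left(1+\frac{\Vert(\boldsymbol{1}-\boldsymbol{r})\odot\boldsymbol{\hat\theta}\Vert_1}{Z}\right).
\]
Taking absolute values and using $|s|\le 1$ together with Cauchy--Schwarz on the residual term produces the desired shape. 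The $|r_i^{2}\hat\theta_i - r_j^{2}\hat\theta_j|/Z$ on the left is exactly what appears in the statement, and the second summand on the right matches the $|r_i-r_j|$ correction factor.

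Two standard bounds finish the proof. First, since the design matrix is standardized, $\Vert \boldsymbol{x}_i-\boldsymbol{x}_j\Vert_2^{2} = 2(1-\rho)$, where $\rho$ is the sample correlation (this is where the $\sqrt{2(1-\rho)}$ comes from). Second, optimality gives $L(\boldsymbol{\hat\theta})\le L(\boldsymbol{0})=\tfrac{1}{2}\Vert\boldsymbol{y}\Vert_2^{2}$, and since the regularization term is nonnegative, $\Vert X\boldsymbol{\hat\theta}-\boldsymbol{y}\Vert_2 \le \Vert\boldsymbol{y}\Vert_2$. Dividing the inequality through by $n\lambda$ yields exactly the theorem.

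The main obstacle I anticipate is bookkeeping the sign vector $\boldsymbol{s}$ through the subtraction: one has to use the $\hat\theta_i\hat\theta_j>0$ hypothesis at precisely the right moment to cancel the $s_i,s_j$ contributions down to a single common $s$ with coefficient $(r_i-r_j)$. Once that cancellation is in hand, everything else is a straightforward application of Cauchy--Schwarz, standardization, and the optimality bound on the residual.
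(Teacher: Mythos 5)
Your proposal is correct and follows essentially the same route as the paper's own proof: subtract the $i$-th and $j$-th coordinates of the subgradient optimality condition, use $\hat\theta_i\hat\theta_j>0$ to collapse the sign terms, then apply Cauchy--Schwarz together with $\Vert x_i - x_j\Vert_2^2 = 2(1-\rho)$ from standardization and the bound $\Vert \boldsymbol{y}-X\boldsymbol{\hat\theta}\Vert_2 \leq \Vert\boldsymbol{y}\Vert_2$ from $L(\boldsymbol{\hat\theta})\leq L(\boldsymbol{0})$. No gaps; the sign bookkeeping you flag as the main obstacle is handled exactly as you describe, since the hypothesis forces $s_i=s_j=\mathrm{sgn}(\hat\theta_i)$.
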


  \begin{proof}
  Denote the objective in \textbf{Equation (\ref{regression-obj})} as $L$. Assume $\hat \theta_i \hat \theta_j > 0$, $\boldsymbol{\hat \theta}$ is the optimal weights, and the design matrix $X$ is standardized to have zero mean and unit variance in its column. Via the optimal condition and (\ref{orthog-general}), subgradient $\boldsymbol{g}$ at $\boldsymbol{\hat \theta}$ is $0$. Hence we have
  
  \begin{equation} \label{corr-eq1}
  \resizebox{.48\textwidth}{!}{$
  -x_i^\top(\boldsymbol{y}-X \boldsymbol{\hat \theta}) + n\lambda((1-r_i) s_i + \frac{\Vert (\boldsymbol{1}-\boldsymbol{r}) \odot \boldsymbol{\hat \theta} \Vert_1}{Z}
  ((1-r_i) s_i + r_i^2 \hat \theta_i)) = 0
  $}
  \end{equation}
  
  \begin{equation} \label{corr-eq2}
  \resizebox{.48\textwidth}{!}{$
  -x_j^\top(\boldsymbol{y}-X\boldsymbol{\hat \theta}) + n\lambda((1-r_j) s_j + \frac{\Vert (\boldsymbol{1}-\boldsymbol{r}) \odot \boldsymbol{\hat \theta} \Vert_1}{Z}
  ((1-r_j) s_j + r_j^2 \hat \theta_j)) = 0 
  $}
  \end{equation}

  Substract \ref{corr-eq2} from \ref{corr-eq1}. The assumption that $\hat \theta_i \hat \theta_j > 0$ implies $sgn(\hat \theta_i)=sgn(\hat \theta_j)$ and eliminates the need to discuss the subgradient issue.
  
  \small{
  $(x_j^\top-x_i^\top)(\boldsymbol{y}-X \boldsymbol{\hat \theta}) + n\lambda((r_j-r_i)sgn(\hat \theta_i) + \frac{\Vert (\boldsymbol{1}-\boldsymbol{r}) \odot \boldsymbol{\hat \theta} \Vert_1}{Z}
  ((r_j-r_i) sgn(\hat \theta_i) + r_i^2 \hat \theta_i - r_j^2 \hat \theta_j)) = 0$
  }
  

  Rearrange to get

  \begin{equation} \label{corr-eq3}
  \resizebox{.48\textwidth}{!}{$
  \frac{r_i^2 \hat \theta_i - r_j^2 \hat \theta_j}{Z} = \frac{(x_i^\top - x_j^\top)(\boldsymbol{y}-X\hat \theta)}{n \lambda}
  + (r_i-r_j)sgn(\hat \theta_i) \left ( 1+\frac{\Vert (\boldsymbol{1}-\boldsymbol{r}) \odot \boldsymbol{\hat \theta} \Vert_1}{Z} \right )
  $}
  \end{equation}
  
  Being the optimal weights, $L(\boldsymbol{\hat \theta}) \leq L(\boldsymbol{0})$, which implies $\Vert y-X \boldsymbol{\hat \theta} \Vert_2^2 \leq \Vert \boldsymbol{y} \Vert_2^2$

  Also, standardized design matrix gives $\Vert x_i-x_j\Vert_2^2= \langle x_i, x_i \rangle + \langle x_j, x_j \rangle - 2 \langle x_i, x_j \rangle =2(1-\rho)$
  
  Taking the absolute value of \textbf{Equation (\ref{corr-eq3})} and applying Cauchy Schwarz inequality, we get
  
  \begin{equation}
  \resizebox{.48\textwidth}{!}{$
  \frac{|r_i^2 \hat \theta_i - r_j^2 \hat \theta_j|}{Z} \leq \frac{\Vert x_i - x_j \Vert_2 \Vert \boldsymbol{y}-X \boldsymbol{\hat \theta} \Vert_2}{n \lambda} + |r_i - r_j| \left (1+\frac{\Vert (\boldsymbol{1}-\boldsymbol{r}) \odot \boldsymbol{\hat \theta} \Vert_1}{Z} \right )
  $}
  \end{equation}
  
  which is less or equal to 
  
  \begin{equation}
  \resizebox{.3\textwidth}{!}{$
     \frac{\sqrt{2(1-\rho)}\Vert \boldsymbol{y} \Vert_2}{n \lambda}
  + |r_i - r_j| \left (1+\frac{\Vert (\boldsymbol{1}-\boldsymbol{r}) \odot \boldsymbol{\hat \theta} \Vert_1}{Z} \right )
  $}
  \end{equation}
  
  
  \end{proof}

  \begin{corollary}
  if $\hat \theta_i \hat \theta_j > 0$, design matrix is standardized, and $r_i=r_j \neq 0$
  $$\frac{|\hat \theta_i - \hat \theta_j|}{Z} \leq \frac{\sqrt{2(1-\rho)} \Vert \boldsymbol{y} \Vert_2}{r_i^2 n \lambda}$$

  where $Z = \sqrt{\Vert (\boldsymbol{1}-\boldsymbol{r}) \odot \boldsymbol{\hat \theta} \Vert_1^2 + \Vert \boldsymbol{r} \odot \boldsymbol{\hat \theta} \Vert_2^2}$,
  $\rho$ is the sample covariance between $x_i$ and $x_j$
  \end{corollary}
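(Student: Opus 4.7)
The plan is to derive the corollary as an immediate specialization of the preceding theorem, which already gives the grouping-style bound
$$\frac{|r_i^2 \hat\theta_i - r_j^2 \hat\theta_j|}{Z} \leq \frac{\sqrt{2(1-\rho)}\,\Vert \boldsymbol{y} \Vert_2}{n\lambda} + |r_i - r_j|\left(1 + \frac{\Vert (\boldsymbol{1}-\boldsymbol{r}) \odot \boldsymbol{\hat\theta}\Vert_1}{Z}\right).$$
All hypotheses of the theorem ($\hat\theta_i \hat\theta_j > 0$ and standardized design) continue to hold, so I would simply invoke it and then exploit the additional assumption $r_i = r_j \neq 0$.

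First I would note that the extra hypothesis $r_i = r_j$ makes the second term on the right-hand side vanish, since $|r_i - r_j| = 0$. This eliminates the $Z$-dependent bracket entirely, leaving only the $\sqrt{2(1-\rho)}\Vert \boldsymbol{y}\Vert_2/(n\lambda)$ term on the right.

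Next I would simplify the left-hand side using $r_i = r_j$: the numerator becomes $|r_i^2 \hat\theta_i - r_j^2 \hat\theta_j| = r_i^2\,|\hat\theta_i - \hat\theta_j|$. Because the hypothesis also gives $r_i \neq 0$, the scalar $r_i^2$ is strictly positive and can be divided across without flipping the inequality, yielding
$$\frac{|\hat\theta_i - \hat\theta_j|}{Z} \leq \frac{\sqrt{2(1-\rho)}\,\Vert \boldsymbol{y}\Vert_2}{r_i^2 \, n \lambda},$$
which is exactly the claimed bound. There is no real obstacle here; the content of the corollary lies entirely in the preceding theorem, and the only subtlety is checking that the substitutions $|r_i - r_j| = 0$ and division by $r_i^2 > 0$ are both legitimate under the stated hypotheses, which they are.
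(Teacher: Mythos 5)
Your proposal is correct and matches the paper's intent: the corollary is stated as an immediate consequence of the preceding theorem, obtained exactly by noting that $r_i=r_j$ kills the $|r_i-r_j|$ term and factors the left-hand numerator as $r_i^2|\hat\theta_i-\hat\theta_j|$, after which division by $r_i^2>0$ gives the bound. Nothing further is needed.
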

   
  This verifies the existence of the grouping effect: highly correlated features (with similar risk) are grouped together in the parameter space.

\end{document}